\definecolor{darkred}{rgb}{0.55, 0.0, 0.0}
\definecolor{lightred}{rgb}{0.94, 0.5, 0.5}
\definecolor{rosybrown}{rgb}{0.74, 0.56, 0.56}
\definecolor{darkgreen}{rgb}{0.0, 0.39, 0.0}
\definecolor{skyblue}{rgb}{0.56, 0.93, 0.56}
\definecolor{darkblue}{rgb}{0.0, 0.0, 0.55}
\definecolor{lightblue}{rgb}{0.68, 0.85, 0.9}
\definecolor{skyblue}{rgb}{0.53, 0.81, 0.92}
\definecolor{gray}{rgb}{0.5, 0.5, 0.5}
\definecolor{forestgreen}{rgb}{0.13, 0.55, 0.13}
\definecolor{slateblue}{rgb}{0.42, 0.35, 0.80}
\definecolor{darkgray}{rgb}{0.33, 0.33, 0.33}
\definecolor{royalblue}{rgb}{0.25, 0.41, 0.88}
\definecolor{DarkGreen}{RGB}{1,100,32} 
\newcommand{\ssymbol}[1]{^{\@fnsymbol{#1}}}
\theoremstyle{plain}
\newtheorem{theorem}{Theorem}[section]
\theoremstyle{definition}
\theoremstyle{remark}
\title{Towards Effective Federated Graph Foundation Model via Mitigating Knowledge Entanglement}
\author{
Yinlin Zhu$^{1}$\thanks{Equal contribution.}, 
Xunkai Li$^{2}$\footnotemark[1], 
Jishuo Jia$^{3}$, 
Miao Hu$^{1}$, 
Di Wu$^{1}$\thanks{Corresponding author.}, 
Meikang Qiu$^{4}$\\
$^{1}$ Sun Yat-sen University, Guangzhou, China\\
$^{2}$ Beijing Institute of Technology, Beijing, China\\
$^{3}$ Shandong University, Weihai, China\\
$^{4}$ Augusta University, Augusta, Georgia, USA\\
\texttt{zhuylin27@mail2.sysu.edu.cn}, \texttt{cs.xunkai.li@gmail.com}, \texttt{jishuojia447@gmail.com}\\
\texttt{humiao5@mail.sysu.edu.cn}, \texttt{wudi27@mail.sysu.edu.cn}, \texttt{qiumeikang@yahoo.com}
}
\begin{document}

\maketitle

\begin{abstract}
    Recent advances in graph machine learning have shifted to data-centric paradigms, driven by two emerging research fields: 
    (1) Federated graph learning (FGL) facilitates multi-client collaboration but struggles with data and task heterogeneity, resulting in limited practicality;
    (2) Graph foundation model (GFM) enables desirable domain generalization but is typically confined to single-machine training, neglecting the potential of cross-silo data and computational resources.
    It is evident that these two paradigms are complementary, and their integration offers substantial advantages.
    Motivated by this, we present a pioneering study about the \underline{fed}erated \underline{g}raph \underline{f}oundation \underline{m}odel (FedGFM), a novel decentralized GFM training paradigm. Despite the promising vision of FedGFM, \textbf{\textit{knowledge entanglement}} has emerged as a critical challenge, where multi-domain knowledge is encoded into indistinguishable representations, thereby limiting downstream adaptation.
    
    To this end, we propose FedGFM+, an effective FedGFM framework with two key modules to mitigate knowledge entanglement in a dual-pronged manner. 
    (1) \textbf{AncDAI}: From a global perspective, we introduce a novel \underline{\textbf{anc}}hor-based \underline{\textbf{d}}omain-\underline{\textbf{a}}ware \underline{\textbf{i}}nitialization strategy. Before pre-training, each client encodes its local graph into a domain-specific prototypes, which serve as semantic anchors in the representation space. Around each anchor, we construct synthetic embeddings to initialize the global model. We theoretically show that these prototypes are distinguishable across domains, and the initialization provides a strong inductive bias that facilitates disentanglement of domain-specific knowledge. 
    (2) \textbf{AdaDPP}: From a local perspective, during pre-training, each client independently learns a lightweight graph prompt that captures domain semantic preferences. During fine-tuning, prompts from all clients are aggregated into an \underline{\textbf{ada}}ptive \underline{\textbf{d}}omain-sensitive \underline{\textbf{p}}rompt \underline{\textbf{p}}ool, from which the GFM selects relevant prompts to augment the target graph’s attributes, thereby improving the downstream adaptation. FedGFM+ is extensively evaluated on 8 diverse benchmarks spanning multiple domains and tasks, outperforming 20 baselines from isolated supervised learning, FGL, and federated variants of centralized GFM paradigms.

\end{abstract}

\section{Introduction}
\label{sec: Introduction}
    Recent advances in computational capabilities have sparked a data-centric paradigm shift in deep learning. 
    Moving beyond an exclusive reliance on architectural innovations, the AI community now prioritizes large-scale data utilization, as evidenced by the success of GPT-4~\cite{achiam2023gpt4} in language processing and Sora~\cite{liu2024sorareviewbackgroundtechnology} in vision tasks.
    This data-centric scaling trend also extends to graph machine learning, where two learning paradigms are gaining prominence
    (1) Federated graph learning (FGL) enables cross-silo graph collaboration;
    (2) Graph foundation models (GFM) promote multi-domain graph generalization.
    However, both of them face practical deployment limitations.

    Two limitations hinder FGL from achieving cross-domain and cross-task collaboration, as illustrated in Fig.~\ref{fig: motivation} (a):
    (1)~\textbf{Data Heterogeneity.} 
    Due to diverse data sources and processing methods, client graphs often differ in feature dimension, label space, and topology pattern.
    As a result, most FGL methods are confined to collaboration across subsets of a single dataset~\cite{zhang2021fedsage, li2024fedgta, huang2024fgssl}. 
    While GCFL+ \cite{xie2021gcfl} and FedStar \cite{tan2023fedstar} enable limited cross-domain collaboration via domain-aware client clustering or feature-agnostic parameter sharing, they are only applicable to graph-level tasks and lack the ability to capture cross-domain general knowledge at the feature level.
    (2)~\textbf{Task Heterogeneity.} 
    Existing FGL assumes uniform graph granularity and downstream tasks across clients, enforcing one of three settings: node-level (ego-networks for node classification/link prediction), subgraph-level (induced subgraphs from a global graph for node classification/link prediction), or graph-level (graph sets for classification/regression)~\cite{he2021fedgraphnn}. 
    As a result, existing FGL approaches often adopt task-specific designs in both model architectures and training algorithms, which significantly limits their ability to support collaboration across multi-task graph data.

    Meanwhile, existing GFM studies face the following two limitations, as illustrated in Fig.~\ref{fig: motivation} (b):
    (1)~\textbf{Multi-Domain Data Isolation.} 
    Training generalizable GFMs requires diverse graph data spanning multiple domains, like social networks, molecular structures, etc. 
    Although a number of public graph datasets are available, they remain limited in both scale and diversity. 
    In contrast, real-world graph data is expected to continuously grow in volume and variety, yet it is often distributed across institutions and isolated in data silos due to privacy regulations or commercial competition.
    This renders existing centralized GFM approaches increasingly infeasible.
    (2)~\textbf{Cross-Silo Storage and Computation Neglect.}
    Although current GFMs require significantly fewer storage and computation resources than their NLP or vision counterparts, which makes them feasible within a single institution, centralized training frameworks inherently fail to leverage the vast yet fragmented storage and computation capacities distributed across multiple silos in real-world deployments. 
    This under-utilization results in non-trivial opportunity costs, such as redundant resource provisioning and sub-optimal training efficiency.

    \begin{figure}[t]
     \centering
      \includegraphics[width=0.998\textwidth]{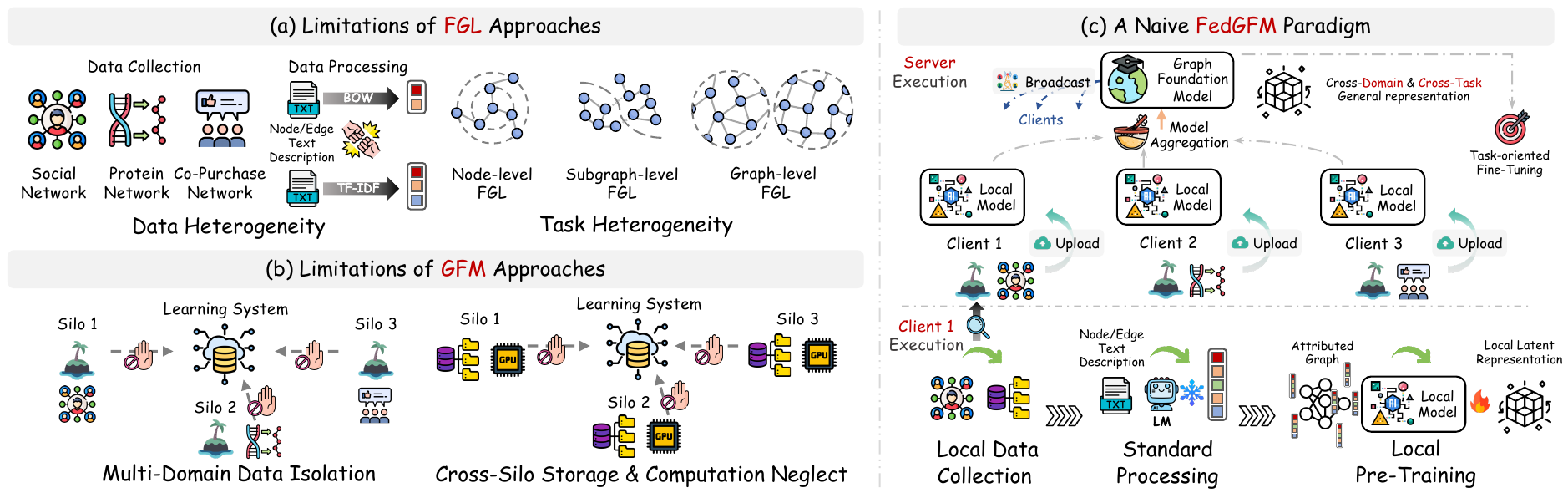}
      \caption{Comparison of the FGL, GFM, and naive FedGFM paradigm. (a) Limitations of FGL approaches; (b) Limitations of GFM approaches; (c) A naive FedGFM paradigm organically combines the complementary strengths of FGL and GFM to overcome their respective limitations.}
    \label{fig: motivation}
    \vspace{-0.5cm}
    \end{figure}

    Fortunately, FGL and GFM exhibit a naturally complementary relationship. Specifically, FGL equips GFM with a decentralized training paradigm that supports learning across distributed silos while efficiently utilizing cross-silo storage and computational resources. In contrast, GFM enhances FGL by offering unified feature encoding and a pre-training followed by fine-tuning framework, thereby facilitating generalized collaboration across diverse graph domains and task types. To this end, we introduce \textbf{Federated Graph Foundation Model} (FedGFM), a novel and practical paradigm designed for training GFM over decentralized, cross-domain, and cross-task graphs. As illustrated in Fig.~\ref{fig: motivation} (c), the FedGFM paradigm follows a pipeline that begins with federated pre-training and proceeds with fine-tuning. During the federated pre-training phase, each client performs self-supervised learning on its private graph to acquire domain-specific representations. The server then aggregates these local models to construct a global model that captures generalizable topological and semantic patterns. The global model is subsequently broadcast to clients as the initialization for the next round of federated pre-training. This iterative process continues across multiple rounds of federated communication. In the fine-tuning phase, the global model is treated as a graph foundation model and is further adapted to specific downstream tasks through supervised learning.
    
    To establish an effective FedGFM framework, our work begins with an empirical investigation (Sec.~\ref{sec: Empirical Investigation}), assessing its feasibility and revealing a non-trivial challenge. Specifically, (1) From a feasibility perspective, FedGFM faces stringent communication constraints, as frequent transmission of large-scale model parameters or gradients is often impractical in real-world federated deployments. This limitation calls for a lightweight yet expressive model architecture. Fortunately, the graph vector quantization-variational auto-encoder (gVQ-VAE), which is widely used as the backbone in centralized GFM, presents a promising solution. It has been extensively validated for its ability to jointly encode graph structures and text attributes into discrete, semantically meaningful representations~\cite{ gfm_gqt, gfm_gft}, making it well-suited for multi-domain pre-training. Meanwhile, its lightweight design naturally aligns with the communication-efficiency requirements of FedGFM. (2) However, naively distributing the pre-training of gVQ-VAE across local clients in a federated setting introduces a critical challenge we term knowledge entanglement. Unlike centralized training, federated pre-training operates on multiple isolated, domain-specific graphs, each with distinct data distributions. Each client's local trained model tend to overfit their domain-specific data without alignment across clients. Consequently, the aggregated global GFM encodes multi-domain graphs into indistinguishable representations and further limits its downstream generalization.
    
    Building upon these insights, we present an effective FedGFM framework named FedGFM+, which involves two key modules to mitigate knowledge entanglement in a dual-pronged manner:
    (1) \textbf{AncDAI}: From a global perspective, we introduce a novel \underline{\textbf{anc}}hor-based \underline{\textbf{d}}omain-\underline{\textbf{a}}ware \underline{\textbf{i}}nitialization strategy. Before pre-training, each client encodes its local graph into a domain-specific prototype, which serve as semantic anchors in the representation space. Around each anchor, we construct synthetic embeddings to initialize the global model. We theoretically show that these domain prototypes are distinguishable across domains, and the initialization provides a strong inductive bias that naturally facilitates encourages separation among knowledge representations from different domains.
    (2) \textbf{AdaDPP}: From a local perspective, during the pre-training stage, each client independently learns and retains a lightweight, domain-sensitive prompt that captures its local semantic preferences, without participating in federated aggregation. In the fine-tuning stage, these prompts are assembled into an \underline{\textbf{ada}}ptive \underline{\textbf{d}}omain-sensitive \underline{\textbf{p}}rompt \underline{\textbf{p}}ool. For a given target graph, the model selects and incorporates the most relevant prompts from the pool based on its semantic characteristics. These prompts serve as domain-specific priors that condition the GFM’s representations, thereby enabling adaptive exploitation of domain knowledge and facilitating improved adaption to downstream tasks.

    \textbf{Our Contributions.} (1) \textbf{Problem Identification.}
    To the best of our knowledge, this is the first exploration of the FedGFM paradigm, which organically combines FGL and GFM to offer a practical solution for training graph foundation model across silos with diverse graph domain and tasks.
    (2) \textbf{In-depth Investigation.} (Sec.~\ref{sec: Empirical Investigation}) We conduct an in-depth empirical investigation for FedGFM, assessing its feasibility and revealing a non-trivial challenges named knowledge entanglement, providing valuable insights for its development.
    (3) \textbf{Novel Framework.} (Sec.~\ref{sec: Methods})
    We propose a novel and effective FedGFM framework named FedGFM+, which employs two key modules to address the knowledge entanglement challenge, including AncDAI from the global perspective and AdaDPP from the local perspective. \noindent (4) \textbf{State-of-the-art Performance.} (Sec.~\ref{sec: Experiments}) Extensive experimental results on graph learning with 8 cross-task and cross-domain datasets demonstrate the superiority of FedGFM+ compared with 20 baselines, including 5 isolated supervised learning methods, 10 FGL techniques, and 5 federated variants of centralized GFM training strategies.

\section{Preliminaries and Problem Formalization}
\label{sec: Preliminaries}

\noindent \textbf{Text-Attributed Graph.} 
Consider a text-attributed graph $G = (\mathcal{V}, \mathcal{E})$, where $\mathcal{V}$ is the set of nodes and $\mathcal{E}$ is the set of edges. Each node $v_i \in \mathcal{V}$ and edge $e_i \in \mathcal{E}$ may be associated with a textual description, which is encoded into a semantic vector using a specific embedding technique (e.g., bag-of-words, pre-trained language models). Depending on the downstream task, the graph may be equipped with supervision signals at different levels: node-level labels (for node classification), edge-level labels (for edge classification or link prediction), or graph-level labels (for graph classification).

\noindent \textbf{Graph Vector Quantization-Variational Auto-Encoder as GFM Backbone.} Most recent GFMs adopt gVQ-VAEs as the trainable GNN. This backbone enables the joint encoding of topology and textual attributes into a discrete embedding space with clear semantic boundaries, making it particularly suitable for multi-domain GFM pre-training.
    Specifically,
    (1) \textit{$\mathcal{G}^\prime=\left(\mathcal{V},\mathcal{E},\mathcal{X}\right)$ $\to$ \textbf{Encoder} $\to$ Embeddings}:
    To ensure generality in arbitrary inputs, the Encoder can be instantiated as any reasonable GNN capable of incorporating both node and edge features to generate informative embeddings $z\in\mathbb{R}^{d}$.
    (2)
    \textit{Embeddings $\to$ \textbf{Codebook} $\to$ Quan. Emb.}:
    To establish clear semantic boundaries, the Codebook $\mathcal{C}$ transforms continuous embeddings $z$ into discrete embeddings $e\in\mathbb{R}^{d}$ (Quan. Emb. $z_q\in\mathbb{R}^{d}$) via similarity retrieval-based vector quantization:
\begin{equation}
    \label{eq: lookup}
    z_q \leftarrow e_j,\;j=\arg\min_{e_i\in\mathcal{C}}\Vert z-e_i\Vert_2,\;\mathcal{C}=\{e_1,e_2,\dots,e_K\}.
\end{equation}
    (3)
    \textit{Quan. Emb. $\to$ \textbf{Decoder} $\to$ $\mathcal{G}_r^\prime=\left(\mathcal{V},\mathcal{E}_r,\mathcal{X}_r\right)$}:
    To enable the self-supervised training, gVQ-VAEs follow an autoencoder framework, where gradients are computed by the discrepancy between the reconstructed graph $\mathcal{G}_r^\prime$ and the original input graph $\mathcal{G}^\prime$, thereby updating the Encoder and Codebook.
    Notably, the trainable components of the Encoder and the Codebook are the weighted matrix and the discrete embeddings $\{e_1,\dots,e_K\}$, which together constitute the trainable GFM embedding function parameterized by $f_\theta$.
    Meanwhile, to construct end-to-end gradient flow, the straight-through estimator (STE)~\cite{bengio2013straight_through} is used to approximate gradients by bypassing the non-differentiable quantization step. Formally, the gVQ-VAE is pre-trained via optimizing loss function as follows:
    \begin{equation}
    \label{eq: loss}
    \begin{aligned}
    &\mathcal{L}_{pretrain} = \mathcal{L}_{feat} + \mathcal{L}_{topo} + \frac{1}{n} \sum_{i=1}^{n} \bigl\| \text{sg}[z_i] - z_{q_i} \bigr\|_2^2 + \cdot \frac{1}{n} \sum_{i=1}^{n} \bigl\| z_i - \text{sg}[z_{q_i}] \bigr\|_2^2, \\
    &\mathcal{L}_{feat} = \frac{1}{n}\sum_{i=1}^{n}(1-\frac{x_i^T \hat{x}_i}{||x_i||\cdot ||\hat{x}_i||})^\gamma, \quad\quad \mathcal{L}_{topo} = ||A-\sigma(\hat{X}\hat{X}^T)||_2^2,
    \end{aligned}
\end{equation}
    where $\text{sg}[\cdot]$ represents the stop-gradient operator, $n$ denotes the number of nodes, $z_i$ represents the $i$-th node embedding produced by the GNN encoder, $z_{q_i}$ denotes its quantized embedding obtained by retrieving the codebook, and $\hat{x}_i$ denotes the reconstructed node attributes projected via MLP-based decoders, i.e., $\hat{x}_i = \delta(z_{q_i})$, $\gamma$ is the scaling factor. More details and related works about gVQ-VAE are presented in Appendix ~\ref{appendix: related work}.

\paragraph{Problem Formalization of FedGFM.} For FedGFM, there is a trusted central server and $K$ clients. The subgraphs or graph collections of the client present a relationship such as subgraph-level decentralization or graph-level decentralization (see Appendix.~\ref{appendix: decentralized graph simulation} for more details about data settings). To unify the representation, we regard the graph data held by $k$-th client as $\mathcal{S}_k$, where $|\mathcal{S}_k| = 1$ for subgraph-level decentralization. The proposed FedGFM paradigm follows a federated pre-training-fine-tuning process. For the \textbf{Federated Pre-Training} phase, each client conducts self-supervised training to optimize its local model based on its local graph, and the server aggregates multiple local models to obtain a global graph foundation model. Consider adapting the widely-used FedAvg~\cite{mcmahan2017fedavg} aggregation strategy in federated learning for vision tasks within the FedGFM framework, the federated pre-training process unfolds as follows: (1) Initialization: At the first communication round ($r\!=\!1$), the central server sets the local model parameters of $K$ clients to the global parameters, i.e.,  
$\mathbf{\Theta}^{k} \leftarrow \mathbf{\Theta}^\text{g}\, \forall k$. (2) Local Updates: Each local model performs  training on the current local data $G^k$ to minimize the self-supervised loss $\mathcal{L}(G^k; \mathbf{\Theta}^{k})$, and then updating the parameters: 
    $\mathbf{\Theta}^k \leftarrow \mathbf{\Theta}^k - \eta \nabla \mathcal{L}$. (3) Global Aggregation: After local training, the server aggregates the local knowledge with respect to the number of training instances, i.e., 
    $\mathbf{\Theta}^\text{g} \leftarrow \frac{N_k}{N} \sum_{k=1}^K \mathbf{\Theta}^k$
    with $N = \sum_k N_k$, and distributes the global parameters $\mathbf{\Theta}^\text{g}$ to local clients selected at the next round. This process iterates between steps 2 and 3 until reaching the final round $R$. This iterative cycle continues until the completion of the last round ($r\!=\!R$), facilitating collaborative GFM training by parameter sharing without the exchange of private local data. For the \textbf{Fine-Tuning} phase, FedGFM first loads and freezes the pre-trained global model from the central server as GFM, then uses available graph supervision signals to fine-tune the task heads to adapt to specific downstream graph tasks.

\section{Empirical Investigation}
\label{sec: Empirical Investigation}

In this section, we present an in-depth empirical study of the FedGFM paradigm, organized around two key questions from different perspectives. \textbf{Q1}: From the perspective of \textbf{Feasibility}, is FedGFM practical for real-world deployment? \textbf{Q2}: From the perspective of \textbf{Effectiveness}, what are the main bottlenecks that limit the effectiveness of a naive FedGFM implementation?

\begin{table}[htbp]
    \setlength{\abovecaptionskip}{0.1cm}
    \centering
    \caption{Comparison of parameter sizes between graph foundation models and those in the language and vision fields. Parameter counts are shown above each method name. `*'  indicates an upper bound. Graph, Language and vision models are highlighted in \setlength{\fboxsep}{0pt}\colorbox{pink!50}{\strut red}, \setlength{\fboxsep}{0pt}\colorbox{yellow!50}{\strut yellow} and \setlength{\fboxsep}{0pt}\colorbox{skyblue!50}{\strut blue}, respectively.}
    \label{tab: feasibility}
    \resizebox{\linewidth}{!}{
    \setlength{\tabcolsep}{10mm}{
    \begin{tabular}{cccc}
    \toprule[1.0pt]
    
    
    
    \parbox[c]{1.5cm}{\centering 16.8M \\ \tiny \setlength{\fboxsep}{0pt}\colorbox{pink!50}{\strut AnyGraph}~\cite{xia2024gfm_anygraph}} &
    \parbox[c]{1.5cm}{\centering 20M \\ \tiny \setlength{\fboxsep}{0pt}\colorbox{pink!50}{\strut GFSE}~\cite{gfm_gfse}} &
    \parbox[c]{1.5cm}{\centering 20M \\ \tiny \setlength{\fboxsep}{0pt}\colorbox{pink!50}{\strut SwapGT}~\cite{gfm_swapgt}} &
    \parbox[c]{1.5cm}{\centering 40M \\ \tiny \setlength{\fboxsep}{0pt}\colorbox{pink!50}{\strut OpenGraph}~\cite{gfm_opengraph}} \\

    \midrule[0.1pt]

    \parbox[c]{1.5cm}{\centering 25M \\ \tiny \setlength{\fboxsep}{0pt}\colorbox{pink!50}{\strut OFA}~\cite{gfm_ofa}} &
    \parbox[c]{1.5cm}{\centering 10M$^*$ \\ \tiny \setlength{\fboxsep}{0pt}\colorbox{pink!50}{\strut GOFA}~\cite{gfm_gofa}} &
    \parbox[c]{1.5cm}{\centering 5M$^*$ \\ \tiny \setlength{\fboxsep}{0pt}\colorbox{pink!50}{\strut GQT}~\cite{gfm_gqt}} &
    \parbox[c]{1.5cm}{\centering 180M \\ \tiny \setlength{\fboxsep}{0pt}\colorbox{pink!50}{\strut Unigraph}~\cite{gfm_unigraph}} \\

    \midrule[0.1pt]
    
    \parbox[c]{1.5cm}{\centering 7M \\ \tiny \setlength{\fboxsep}{0pt}\colorbox{pink!50}{\strut GFT}~\cite{gfm_gft}} &
    \parbox[c]{1.5cm}{\centering 10M \\ \tiny \setlength{\fboxsep}{0pt}\colorbox{pink!50}{\strut RAGraph}~\cite{gfm_ragraph}} &
    \parbox[c]{1.5cm}{\centering 150M \\ \tiny \setlength{\fboxsep}{0pt}\colorbox{pink!50}{\strut GraphCLIP}~\cite{gfm_graphclip}} &
    \parbox[c]{1.5cm}{\centering 31.64M$^*$ \\ \tiny \setlength{\fboxsep}{0pt}\colorbox{pink!50}{\strut UniGraph2}~\cite{gfm_unigraph2}} \\

    \midrule[0.1pt]
    \parbox[c]{1.5cm}{\centering 7B \\ \tiny \setlength{\fboxsep}{0pt}\colorbox{yellow!50}{\strut Llama2-7B}~\cite{touvron2023fm_llama2}} &
    \parbox[c]{1.5cm}{\centering 175B \\ \tiny\setlength{\fboxsep}{0pt}\colorbox{yellow!50}{\strut GPT-3}~\cite{brown2020gpt3}} &
    \parbox[c]{1.5cm}{\centering 540B$^*$ \\ \tiny \setlength{\fboxsep}{0pt}\colorbox{yellow!50}{\strut PaLM}~\cite{chowdhery2023fm_palm}} & 
    \parbox[c]{1.5cm}{\centering 1B \\ \tiny \setlength{\fboxsep}{0pt}\colorbox{skyblue!50}{\strut DINOv2}~\cite{gfm_graphclip}}  \\

    \bottomrule[1.0pt]

    \end{tabular}
    }}
\end{table}

    \begin{wrapfigure}[16]{RT}{0.55\textwidth}
    \centering
    \includegraphics[width=0.55\textwidth]{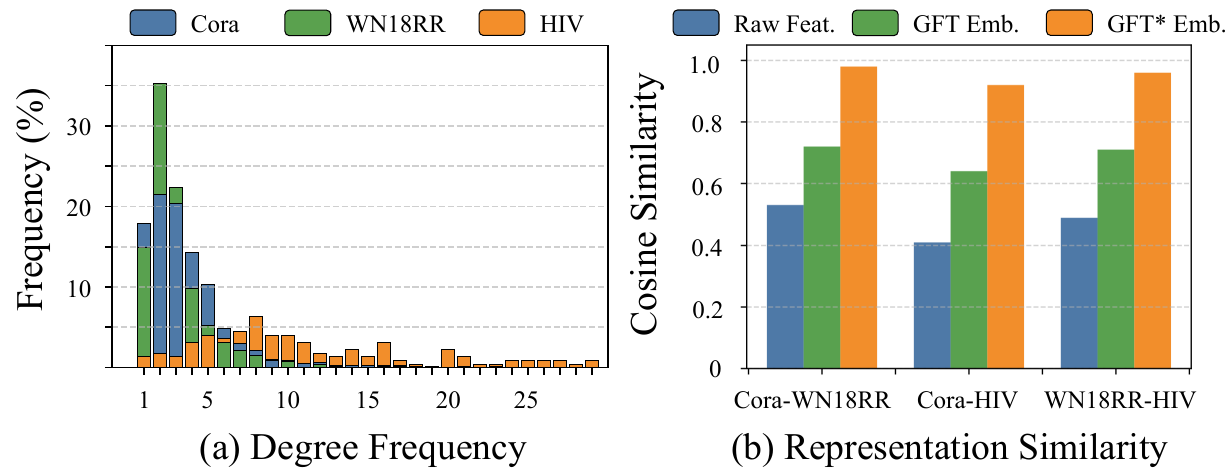}
    \caption{Empirical analysis on three graph datasets: Cora, WN18RR, and HIV. (a) Comparison of topological patterns in terms of degree distribution. (b) Average cosine similarity of original node features and node embeddings encoded by GFT and GFT$^*$, respectively.}
    \label{fig: empirical study}
    \end{wrapfigure}
    
    To address \textbf{Q1}, we survey several representative foundation models to quantify their parameter scales, and summarize the results in Table~\ref{tab: feasibility}. Notably, compared with foundation models in language and vision domains, graph foundation models (GFMs) are significantly more lightweight in terms of parameter size. This suggests that federated pre-training of GFMs is communication-efficient and practically feasible. Among all surveyed GFMs, we further observe that two gVQ-VAE-based methods, GFT~\cite{gfm_gft} and GQT~\cite{gfm_gqt}, exhibit the smallest parameter scales. This highlights the advantage of the gVQ-VAE architecture in achieving a lightweight yet expressive design, making it particularly suitable for FedGFM settings. More related works about GFM are presented in Appendix~\ref{appendix: related work}.

    To address \textbf{Q2}, we conduct a simple yet illustrative visualization experiment, aiming to reveal the bottlenecks that limit the effectiveness of naive FedGFM. Building on the insight of \textbf{Q1}, we implement naive federated variants of GFT~\cite{gfm_gft} (denoted as GFT$^*$), and evaluate GFT and GFT$^*$ on three datasets: Cora~\cite{Yang16cora}, WN18RR~\cite{dettmers2018dataset_wn18rr}, and HIV~\cite{wu2018dataset_moleculenet}, covering different domains (citation networks, knowledge graphs, and molecular graphs).
    
    The empirical results are presented in Fig.~\ref{fig: empirical study}. Specifically, panel (a) illustrates the node degree distributions of the Cora, WN18RR, and HIV datasets (restricted to the first 30 degrees starting from 1 for visual clarity), while panel (b) reports the inter-domain cosine similarity among the three datasets, computed in three different representation spaces: (1) the average initial node features, (2) the average node embeddings learned by GFT, and (3) those learned by GFT$^*$. This comparison reveals how well each model distinguishes multi-domain knowledge during representation learning. As observed, the three datasets differ markedly in both topological structure and initial feature distributions. Despite such heterogeneity, centralized GFT pretraining produces a graph foundation model that generates embeddings with clear domain-specific distinctions. This indicates effective preservation of inter-domain variability through joint optimization. In contrast, the embeddings learned by GFT$^*$ under decentralized federated pretraining show near-unity inter-domain similarity, reflecting a collapse of domain specificity caused by the absence of coordinated global optimization. We term this the \textbf{knowledge entanglement}, a non-trivial challenge to resolve for effective FedGFM design.

\section{Methods}
\label{sec: Methods}

\begin{figure}[htb]
 \centering
  \includegraphics[width=0.998\textwidth]{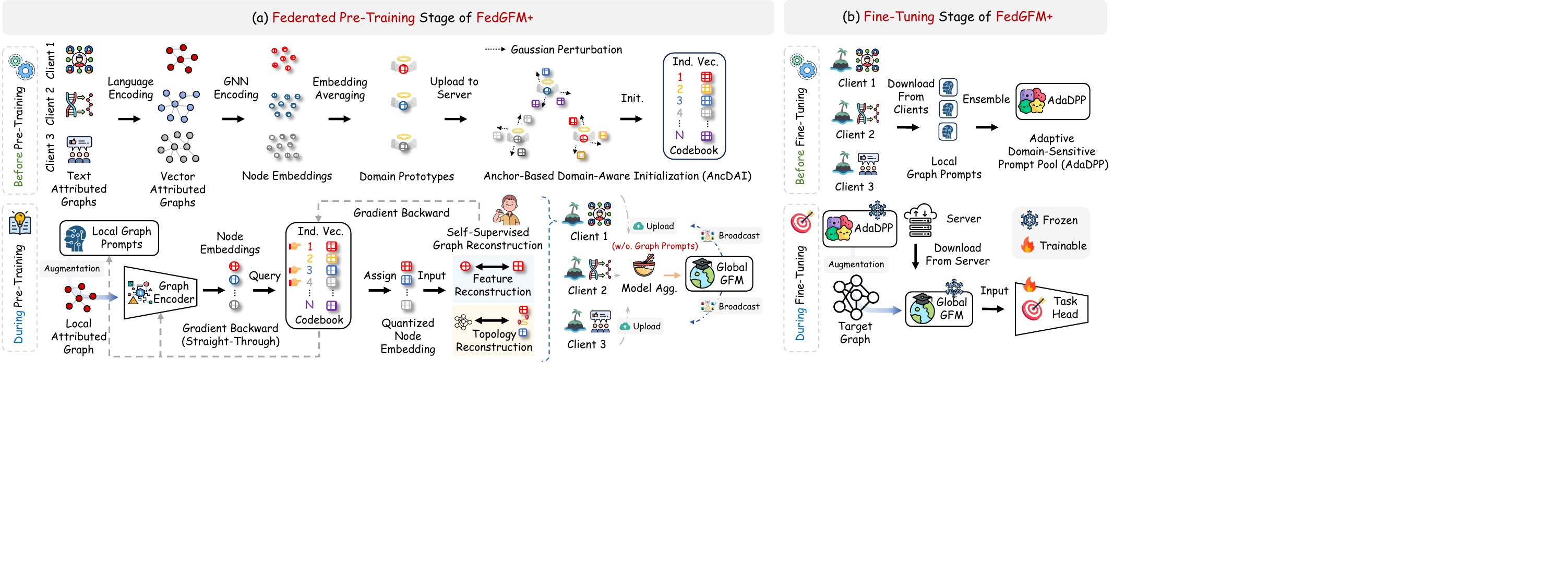}
  \caption{Overview of the proposed FedGFM+ framework.}
\label{fig: framework}
\vspace{-0.4cm}
\end{figure}

In this section, we introduce the proposed FedGFM+ framework. We first provide an overview of FedGFM+ in Fig.~\ref{fig: framework}. At its core, FedGFM+ adopts a federated pre-training and fine-tuning paradigm. During each communication round of pre-training, clients leverage a local gVQ-VAE encoder to perform self-supervised graph reconstruction, capturing domain-specific semantics. The resulting local models are uploaded to the server for aggregation, yielding an updated global model. The global model is subsequently broadcast to clients as the initialization for the next round of federated pre-training. In the fine-tuning stage, this global model serves as a general-purpose GFM encoder, while a task-specific prediction head is optimized for downstream tasks. Moreover, FedGFM+ introduces two key modules to mitigate the knowledge entanglement challenges: (1) \textbf{AncDAI}: Before pre-training, FedGFM+ employs a novel anchor-based domain-aware initialization strategy to initialize the global codebook, providing a strong inductive bias that facilitates disentanglement of domain-specific knowledge. (2) \textbf{AdaDPP}: During pre-training, each client independently learns a lightweight graph prompt that imbues the GFM with its own domain semantic preferences. During fine-tuning, prompts from all clients are aggregated into an
adaptive domain-sensitive prompt pool, from which the GFM selects relevant prompts to augment the target graph attributes, thereby improving the downstream adaptation. Below we introduce these two modules in detail.

\subsection{Anchor-Based Domain-Aware Initialization}
As discussed in Section~\ref{sec: Empirical Investigation}, naive FedGFM suffers from knowledge entanglement, where representations from different domains collapse into indistinguishable embeddings. To mitigate this, from a global perspective, we aim to endow the global model with a strong inductive bias that explicitly encourages the separation of domain-specific semantics.

Before federated pre-training, to capture domain-specific knowledge, we introduce a domain prototype extraction mechanism, which models intrinsic patterns in the graph topology and node attributes of the local graph and summarizes them into a compact, unified-dimensional vector representation. Specifically, for the $k$-th client with a local graph $\mathcal{G}^k = (\mathcal{V}^k, \mathcal{E}^k)$, node features $\mathbf{X}^k$ and adjancency matrix $\mathbf{A}^k$, we first compute the node embeddings $\mathbf{Z}^k$ as follows:
\begin{equation}
\label{eq: encode}
    \mathbf{Z}^k = f_\mathbf{\theta^{\text{glb}}}(\mathbf{X}^k, \mathbf{A}^k)
\end{equation}
where $ \mathbf{\theta}^{\text{glb}} $ denotes the initialized global model parameter broadcast to all clients. The domain prototype $\mathbf{p}^k$ is then obtained by mean-pooling over node embeddings:
\begin{equation}
\label{eq: domain prototype}
    \mathbf{p}^k = \frac{1}{|\mathcal{V}^k|} \sum_{i \in \mathcal{V}^k} \mathbf{z}_i^k 
\end{equation}
We theoretically demonstrate that, even under a randomly initialized model with shared parameters, the domain prototypes—obtained via averaging the encoded node representations—remain distinguishable across clients. This separability stems from intrinsic discrepancies in node features and graph topologies among domains, and can be formally bounded (Appendix~\ref{appendix: proof of theorems} Theorem.~\ref{thm: rand-separability}).

Each client subsequently uploads its prototype to the central server. To steer the global model toward learning domain-aware representations, we treat these prototypes as semantic anchors and synthesize local neighborhoods in the embedding space via controlled perturbations. Specifically, for each anchor $\mathbf{p}^k$, a set of perturbed embeddings $\{\tilde{\mathbf{p}}^k_i\}_{i=1}^{H}$ is generated as:
\begin{equation}
\label{eq: anc}
\tilde{\mathbf{p}}^k_i = \mathbf{p}^k + \sigma\mathbf{\epsilon}_i, \quad \mathbf{\epsilon}_i \sim \mathcal{N}(\mathbf{0}, \mathbf{1}), \quad i = 1, \dots, H,
\end{equation}
where $\mathbf{\epsilon}_i$ is sampled from a standard Gaussian distribution, and $\sigma$ is a noise scaling factor that ensures numerical stability. Notably, the number of synthetic embeddings $H$ is uniformly allocated across prototypes, depending on the number of the learnable codebook tokens in the global model.

Finally, the synthetic embeddings aggregated from all domains are used to initialize the codebook $\mathcal{C}$ of the global model, i.e., $\mathcal{C} \leftarrow \text{Init}(\cup_k \{\tilde{\mathbf{p}}^k_i\}_{i=1}^{H})$. We further provide a theoretical analysis (Appendix~\ref{appendix: proof of theorems} Theorem.~\ref{thm: codebook}) to demonstrate that this initialization introduces a structured inductive bias, which not only facilitates disentangled representation learning across diverse domains but also stabilizes optimization during the early stages of federated pretraining.

\subsection{Adaptive Domain-Sensitive Prompt Pool}

Moreover, to address knowledge entanglement from the local perspective, we introduce a novel prompt learning-based mechanism. During the pre-training stage, each client independently learns and retains domain-specific prompts and is excluded from federated aggregation. During the fine-tuning stage, these prompts serve as semantic priors that condition the GFM's representations, facilitating improved adaptation to diverse downstream tasks.

Concretely, during federated pre-training, each client maintains a set of learnable prompt tokens embedded in its local graph’s feature space. For the $k$-th client, this prompt set is denoted as $\Phi^k = \{\phi_i^k\}_{i=1}^\lambda$, where $\lambda$ is the number of prompts and $F$ the feature dimensionality. Given the local graph $G^k = (\mathcal{V}^k, \mathcal{E}^k)$ and node features $\{x_i^k\}_{v_i \in \mathcal{V}^k}$, node representations are enhanced by a weighted combination of prompts, with attention weights computed via $\lambda$ learnable linear projections:
\begin{equation}
\tilde{x}_i^k = x_i^k + \sum_{j=1}^{\lambda} \alpha_j^k \phi_j^k, \quad \alpha_j^k = \frac{e^{(\mathbf{w}j^k)^T x_i^k}}{\sum_{t=1}^\lambda e^{(\mathbf{w}_t^k)^T x_i^k}},
\end{equation}
where $\alpha_j^k$ reflects the relevance of the $j$-th prompt to node $v_i$, and $\mathbf{w}_j^k$ is the corresponding learnable projection vector. These prompts and projection weights are optimized together with the local GNN backbone through a self-supervised graph reconstruction task, as described in Eq.~\ref{eq: loss}.

During the fine-tuning stage, we downloads the global model as GFM, which encodes generalizable cross-domain knowledge. In parallel, it collects all locally learned prompts and associated projection weights to construct a adaptive domain-aware prompt pool, denoted as $\rho = \{\phi_i^j\}_{i=1,j=1}^{\lambda,K}$ and $\mathbf{w} = [\mathbf{w}^1, \dots, \mathbf{w}^K]$. Given a target graph $G^{\text{tgt}} = (\mathcal{V}^{\text{tgt}}, \mathcal{E}^{\text{tgt}})$, node features are augmented using this prompt pool. For each node $v_i \in \mathcal{V}^{\text{tgt}}$ with feature $x_i^{\text{tgt}}$, the enhanced representation is computed as:
\begin{equation}
\tilde{x}_i^{\text{tgt}} = x_i^{\text{tgt}} + \sum_{p=1}^{K} \sum_{j=1}^{\lambda} \alpha_j^p \phi_j^p, \quad \alpha_j^p = \frac{e^{(\mathbf{w}j^p)^T x_i^{\text{tgt}}}}{\sum_{t=1}^{K} \sum_{l=1}^{\lambda} e^{(\mathbf{w}_l^t)^T x_i^{\text{tgt}}}}.
\end{equation}

As a result, FedGFM+ effectively capitalizes on domain-specific prompts acquired during pre-training, substantially improving its adaptability to heterogeneous domains and diverse downstream tasks in the fine-tuning phase. 


\section{Experiments}
\label{sec: Experiments}

In this section, we present a comprehensive evaluation of FedGFM+. We begin by introducing the experimental setup (Sec.\ref{sec: experimental setup}), and then seek to answer the following research questions:
\textbf{Q1}: After task-specific fine-tuning, does the GFM trained by FedGFM+ consistently outperform (1) isolated supervised learning techniques, (2) state-of-the-art FGL baselines, and (3) naive federated variants of centralized GFM strategies across node-, edge-, and graph-level prediction tasks (Sec.\ref{sec: performance comparison})?
\textbf{Q2}: How does each individual module contribute to the overall performance of FedGFM+ (Sec.\ref{sec: ablation study})?
\textbf{Q3}: Is FedGFM+ robust to changes in hyperparameter configurations (Sec.\ref{sec: sensitivity analysis})? In addition to the main evaluation, we further investigate the few-shot generalization ability (\textbf{Q4}) in Appendix~\ref{appendix: more exp}.
\subsection{Experimental Setup}
\label{sec: experimental setup}

\begin{table*}[h]
    \setlength{\abovecaptionskip}{0.2cm} 
    \renewcommand{\arraystretch}{2} 
    \caption{Performance comparison of FedGFM+ and baselines. Best results of each baseline category are in \underline{underline}. `*' denotes federated variants of centralized GFM. `N/A' denotes  task inapplicability. Node, edge, and graph classification datasets are marked in \setlength{\fboxsep}{0pt}\colorbox{pink!50}{\strut red}, \setlength{\fboxsep}{0pt}\colorbox{yellow!50}{\strut yellow}, and \setlength{\fboxsep}{0pt}\colorbox{skyblue!50}{\strut blue}, respectively.}
    \footnotesize 
    \label{tab: compare baseline}
    \resizebox{\linewidth}{75mm}{  
    \setlength{\tabcolsep}{2mm}{  
    \begin{tabular}{c|c|c|c|c|c|c|c|c}
        \toprule[1pt] 
         \diagbox{Method}{Dataset} & \setlength{\fboxsep}{0pt}\colorbox{pink!50}{\strut Cora} & \setlength{\fboxsep}{0pt}\colorbox{pink!50}{\strut PubMed} & \setlength{\fboxsep}{0pt}\colorbox{pink!50}{\strut OGB-arxiv} & \setlength{\fboxsep}{0pt}\colorbox{pink!50}{\strut WikiCS} & \setlength{\fboxsep}{0pt}\colorbox{yellow!50}{\strut FB15K237} & \setlength{\fboxsep}{0pt}\colorbox{yellow!50}{\strut WN18RR} & \setlength{\fboxsep}{0pt}\colorbox{skyblue!50}{\strut HIV} & \setlength{\fboxsep}{0pt}\colorbox{skyblue!50}{\strut PCBA} \\
        
        \midrule[0.1pt]
        Linear
        & \parbox[c]{1cm}{\centering 73.44 \\ \tiny $\pm$ 0.13}
        & \parbox[c]{1cm}{\centering 85.11 \\ \tiny $\pm$ 0.15} 
        & \parbox[c]{1cm}{\centering 67.55 \\ \tiny $\pm$ 0.08}
        & \parbox[c]{1cm}{\centering 74.38 \\ \tiny $\pm$ 0.16}
        & \parbox[c]{1cm}{\centering 72.05 \\ \tiny $\pm$ 0.14}
        & \parbox[c]{1cm}{\centering 84.33 \\ \tiny $\pm$ 0.20}
        & \parbox[c]{1cm}{\centering 65.48 \\ \tiny $\pm$ 0.23}
        & \parbox[c]{1cm}{\centering 57.71 \\ \tiny $\pm$ 0.22}
        \\
       
        GCN~\cite{kipf2016gcn}
        & \parbox[c]{1cm}{\centering 80.17 \\ \tiny $\pm$ 0.35}
        & \parbox[c]{1cm}{\centering 84.70 \\ \tiny $\pm$ 0.22} 
        & \parbox[c]{1cm}{\centering 72.50 \\ \tiny $\pm$ 0.24}
        & \parbox[c]{1cm}{\centering 77.24 \\ \tiny $\pm$ 0.16}
        & \parbox[c]{1cm}{\centering 71.24 \\ \tiny $\pm$ 0.30}
        & \parbox[c]{1cm}{\centering 82.27 \\ \tiny $\pm$ 0.18}
        & \parbox[c]{1cm}{\centering 65.37 \\ \tiny $\pm$ 0.51}
        & \parbox[c]{1cm}{\centering 63.41 \\ \tiny $\pm$ 0.20}
        \\

        GAT~\cite{velivckovic2017gat}
        & \parbox[c]{1cm}{\centering \underline{81.09} \\ \tiny $\pm$ 0.33}
        & \parbox[c]{1cm}{\centering 84.47 \\ \tiny $\pm$ 0.11} 
        & \parbox[c]{1cm}{\centering 71.34 \\ \tiny $\pm$ 0.29}
        & \parbox[c]{1cm}{\centering 77.59 \\ \tiny $\pm$ 0.42}
        & \parbox[c]{1cm}{\centering \underline{73.07} \\ \tiny $\pm$ 0.19}
        & \parbox[c]{1cm}{\centering \underline{85.52} \\ \tiny $\pm$ 0.12}
        & \parbox[c]{1cm}{\centering 65.02 \\ \tiny $\pm$ 0.28}
        & \parbox[c]{1cm}{\centering 64.83 \\ \tiny $\pm$ 0.26}
        \\
        
        GraphSAGE~\cite{hamilton2017graphsage}
        & \parbox[c]{1cm}{\centering 80.52 \\ \tiny $\pm$ 0.28}
        & \parbox[c]{1cm}{\centering \underline{85.20} \\ \tiny $\pm$ 0.24} 
        & \parbox[c]{1cm}{\centering \underline{72.78} \\ \tiny $\pm$ 0.31}
        & \parbox[c]{1cm}{\centering \underline{77.63} \\ \tiny $\pm$ 0.21}
        & \parbox[c]{1cm}{\centering 72.10 \\ \tiny $\pm$ 0.38}
        & \parbox[c]{1cm}{\centering 82.98 \\ \tiny $\pm$ 0.22}
        & \parbox[c]{1cm}{\centering 65.19 \\ \tiny $\pm$ 0.27}
        & \parbox[c]{1cm}{\centering 66.42 \\ \tiny $\pm$ 0.14}
        \\

        GIN~\cite{xu2018gin}
        & \parbox[c]{1cm}{\centering 78.45 \\ \tiny $\pm$ 0.23}
        & \parbox[c]{1cm}{\centering 83.61 \\ \tiny $\pm$ 0.44} 
        & \parbox[c]{1cm}{\centering 70.74 \\ \tiny $\pm$ 0.37}
        & \parbox[c]{1cm}{\centering 69.24 \\ \tiny $\pm$ 0.25}
        & \parbox[c]{1cm}{\centering 70.06 \\ \tiny $\pm$ 0.14}
        & \parbox[c]{1cm}{\centering 80.25 \\ \tiny $\pm$ 0.28}
        & \parbox[c]{1cm}{\centering \underline{66.30} \\ \tiny $\pm$ 0.18}
        & \parbox[c]{1cm}{\centering \underline{68.83} \\ \tiny $\pm$ 0.30}
        \\

        \midrule[0.1pt]

        FedAvg~\cite{mcmahan2017fedavg}
        & \parbox[c]{1cm}{\centering 81.45 \\ \tiny $\pm$ 0.27}
        & \parbox[c]{1cm}{\centering 85.22 \\ \tiny $\pm$ 0.18} 
        & \parbox[c]{1cm}{\centering 71.53 \\ \tiny $\pm$ 0.29}
        & \parbox[c]{1cm}{\centering 77.67 \\ \tiny $\pm$ 0.13}
        & \parbox[c]{1cm}{\centering 73.14 \\ \tiny $\pm$ 0.11}
        & \parbox[c]{1cm}{\centering 83.55 \\ \tiny $\pm$ 0.20}
        & \parbox[c]{1cm}{\centering 66.05 \\ \tiny $\pm$ 0.15}
        & \parbox[c]{1cm}{\centering 68.52 \\ \tiny $\pm$ 0.28}
        \\
       
        MOON~\cite{li2021moon}
        & \parbox[c]{1cm}{\centering 81.72 \\ \tiny $\pm$ 0.38}
        & \parbox[c]{1cm}{\centering 85.84 \\ \tiny $\pm$ 0.21} 
        & \parbox[c]{1cm}{\centering 72.50 \\ \tiny $\pm$ 0.41}
        & \parbox[c]{1cm}{\centering 77.54 \\ \tiny $\pm$ 0.24}
        & \parbox[c]{1cm}{\centering \underline{73.20} \\ \tiny $\pm$ 0.15}
        & \parbox[c]{1cm}{\centering 83.64 \\ \tiny $\pm$ 0.45}
        & \parbox[c]{1cm}{\centering 67.10 \\ \tiny $\pm$ 0.26}
        & \parbox[c]{1cm}{\centering 69.81 \\ \tiny $\pm$ 0.30}
        \\

        FedSage+~\cite{zhang2021fedsage}
        & \parbox[c]{1cm}{\centering 82.15 \\ \tiny $\pm$ 0.28}
        & \parbox[c]{1cm}{\centering 86.37 \\ \tiny $\pm$ 0.15} 
        & \parbox[c]{1cm}{\centering 72.80 \\ \tiny $\pm$ 0.11}
        & \parbox[c]{1cm}{\centering \underline{78.64} \\ \tiny $\pm$ 0.34}
        & \parbox[c]{1cm}{\centering 73.17 \\ \tiny $\pm$ 0.22}
        & \parbox[c]{1cm}{\centering 82.95 \\ \tiny $\pm$ 0.16}
        & \parbox[c]{1cm}{\centering \textit{N/A}}
        & \parbox[c]{1cm}{\centering \textit{N/A}}
        \\
        
        Fed-PUB~\cite{baek2022fedpub}
        & \parbox[c]{1cm}{\centering 81.98 \\ \tiny $\pm$ 0.20}
        & \parbox[c]{1cm}{\centering 86.51 \\ \tiny $\pm$ 0.32} 
        & \parbox[c]{1cm}{\centering 73.15 \\ \tiny $\pm$ 0.29}
        & \parbox[c]{1cm}{\centering 78.32 \\ \tiny $\pm$ 0.43}
        & \parbox[c]{1cm}{\centering 72.84 \\ \tiny $\pm$ 0.13}
        & \parbox[c]{1cm}{\centering \underline{83.79} \\ \tiny $\pm$ 0.25}
        & \parbox[c]{1cm}{\centering \textit{N/A}}
        & \parbox[c]{1cm}{\centering \textit{N/A}}
        \\

        FedGTA~\cite{li2024fedgta}
        & \parbox[c]{1cm}{\centering \underline{82.41} \\ \tiny $\pm$ 0.33}
        & \parbox[c]{1cm}{\centering \underline{87.10} \\ \tiny $\pm$ 0.25} 
        & \parbox[c]{1cm}{\centering 73.28 \\ \tiny $\pm$ 0.14}
        & \parbox[c]{1cm}{\centering 78.60 \\ \tiny $\pm$ 0.24}
        & \parbox[c]{1cm}{\centering \textit{N/A}}
        & \parbox[c]{1cm}{\centering \textit{N/A}}
        & \parbox[c]{1cm}{\centering \textit{N/A}}
        & \parbox[c]{1cm}{\centering \textit{N/A}}
        \\

        FedTAD~\cite{zhu2024fgl_fedtad}
        & \parbox[c]{1cm}{\centering 82.24 \\ \tiny $\pm$ 0.18}
        & \parbox[c]{1cm}{\centering 86.95 \\ \tiny $\pm$ 0.30} 
        & \parbox[c]{1cm}{\centering 72.50 \\ \tiny $\pm$ 0.17}
        & \parbox[c]{1cm}{\centering 78.22 \\ \tiny $\pm$ 0.27}
        & \parbox[c]{1cm}{\centering \textit{N/A}}
        & \parbox[c]{1cm}{\centering \textit{N/A}}
        & \parbox[c]{1cm}{\centering \textit{N/A}}
        & \parbox[c]{1cm}{\centering \textit{N/A}} \\

        FGSSL~\cite{huang2024fgssl}
        & \parbox[c]{1cm}{\centering 81.55 \\ \tiny $\pm$ 0.42}
        & \parbox[c]{1cm}{\centering 85.60 \\ \tiny $\pm$ 0.21} 
        & \parbox[c]{1cm}{\centering \underline{73.33} \\ \tiny $\pm$ 0.15}
        & \parbox[c]{1cm}{\centering 76.25 \\ \tiny $\pm$ 0.24}
        & \parbox[c]{1cm}{\centering \textit{N/A}}
        & \parbox[c]{1cm}{\centering \textit{N/A}}
        & \parbox[c]{1cm}{\centering \textit{N/A}}
        & \parbox[c]{1cm}{\centering \textit{N/A}} \\
        
        FGGP~\cite{wan2024fgl_fggp}
        & \parbox[c]{1cm}{\centering 82.03 \\ \tiny $\pm$ 0.13}
        & \parbox[c]{1cm}{\centering 85.10 \\ \tiny $\pm$ 0.37} 
        & \parbox[c]{1cm}{\centering 74.19 \\ \tiny $\pm$ 0.05}
        & \parbox[c]{1cm}{\centering 76.44 \\ \tiny $\pm$ 0.18}
        & \parbox[c]{1cm}{\centering \textit{N/A}}
        & \parbox[c]{1cm}{\centering \textit{N/A}}
        & \parbox[c]{1cm}{\centering \textit{N/A}}
        & \parbox[c]{1cm}{\centering \textit{N/A}} \\

        GCFL+~\cite{xie2021gcfl}
        & \parbox[c]{1cm}{\centering \textit{N/A}}
        & \parbox[c]{1cm}{\centering \textit{N/A}} 
        & \parbox[c]{1cm}{\centering \textit{N/A}}
        & \parbox[c]{1cm}{\centering \textit{N/A}}
        & \parbox[c]{1cm}{\centering \textit{N/A}}
        & \parbox[c]{1cm}{\centering \textit{N/A}}
        & \parbox[c]{1cm}{\centering 67.51 \\ \tiny $\pm$ 0.14}
        & \parbox[c]{1cm}{\centering 71.95 \\ \tiny $\pm$ 0.28} \\

        FedStar~\cite{tan2023fedstar}
        & \parbox[c]{1cm}{\centering \textit{N/A}}
        & \parbox[c]{1cm}{\centering \textit{N/A}} 
        & \parbox[c]{1cm}{\centering \textit{N/A}}
        & \parbox[c]{1cm}{\centering \textit{N/A}}
        & \parbox[c]{1cm}{\centering \textit{N/A}}
        & \parbox[c]{1cm}{\centering \textit{N/A}}
        & \parbox[c]{1cm}{\centering \underline{67.82} \\ \tiny $\pm$ 0.21} 
        & \parbox[c]{1cm}{\centering \underline{71.27} \\ \tiny $\pm$ 0.39} \\

        \midrule[0.1pt]

        OFA$^*$~\cite{gfm_ofa}
        & \parbox[c]{1cm}{\centering 80.04 \\ \tiny $\pm$ 0.33}
        & \parbox[c]{1cm}{\centering 85.30 \\ \tiny $\pm$ 0.29} 
        & \parbox[c]{1cm}{\centering 73.12 \\ \tiny $\pm$ 0.25}
        & \parbox[c]{1cm}{\centering 78.55 \\ \tiny $\pm$ 0.37}
        & \parbox[c]{1cm}{\centering 72.88 \\ \tiny $\pm$ 0.26}
        & \parbox[c]{1cm}{\centering 84.28 \\ \tiny $\pm$ 0.49}
        & \parbox[c]{1cm}{\centering 67.00 \\ \tiny $\pm$ 0.19}
        & \parbox[c]{1cm}{\centering 71.05 \\ \tiny $\pm$ 0.28} \\

        GFT$^*$~\cite{gfm_gft}
        & \parbox[c]{1cm}{\centering 81.07 \\ \tiny $\pm$ 0.24}
        & \parbox[c]{1cm}{\centering 84.24 \\ \tiny $\pm$ 0.38} 
        & \parbox[c]{1cm}{\centering 73.19 \\ \tiny $\pm$ 0.25}
        & \parbox[c]{1cm}{\centering \underline{78.81} \\ \tiny $\pm$ 0.19}
        & \parbox[c]{1cm}{\centering 73.52 \\ \tiny $\pm$ 0.14}
        & \parbox[c]{1cm}{\centering 86.30 \\ \tiny $\pm$ 0.22}
        & \parbox[c]{1cm}{\centering 66.32 \\ \tiny $\pm$ 0.27}
        & \parbox[c]{1cm}{\centering 72.81 \\ \tiny $\pm$ 0.34} \\

        UniGraph$^*$~\cite{gfm_unigraph}
        & \parbox[c]{1cm}{\centering 81.53 \\ \tiny $\pm$ 0.18}
        & \parbox[c]{1cm}{\centering \underline{86.07} \\ \tiny $\pm$ 0.20} 
        & \parbox[c]{1cm}{\centering 72.94 \\ \tiny $\pm$ 0.33}
        & \parbox[c]{1cm}{\centering 78.47 \\ \tiny $\pm$ 0.22}
        & \parbox[c]{1cm}{\centering \underline{73.80} \\ \tiny $\pm$ 0.48}
        & \parbox[c]{1cm}{\centering \underline{86.44} \\ \tiny $\pm$ 0.29}
        & \parbox[c]{1cm}{\centering 67.24 \\ \tiny $\pm$ 0.31}
        & \parbox[c]{1cm}{\centering \underline{73.51} \\ \tiny $\pm$ 0.24} \\

        GQT$^*$~\cite{gfm_gqt}
        & \parbox[c]{1cm}{\centering 81.92 \\ \tiny $\pm$ 0.26}
        & \parbox[c]{1cm}{\centering 85.59 \\ \tiny $\pm$ 0.37} 
        & \parbox[c]{1cm}{\centering \underline{74.07} \\ \tiny $\pm$ 0.47}
        & \parbox[c]{1cm}{\centering 77.52 \\ \tiny $\pm$ 0.28}
        & \parbox[c]{1cm}{\centering 73.40 \\ \tiny $\pm$ 0.11}
        & \parbox[c]{1cm}{\centering 85.66 \\ \tiny $\pm$ 0.29}
        & \parbox[c]{1cm}{\centering \underline{67.93} \\ \tiny $\pm$ 0.24}
        & \parbox[c]{1cm}{\centering 73.22 \\ \tiny $\pm$ 0.30} \\

        GraphCLIP$^*$~\cite{gfm_graphclip}
        & \parbox[c]{1cm}{\centering \underline{82.33} \\ \tiny $\pm$ 0.27}
        & \parbox[c]{1cm}{\centering 84.95 \\ \tiny $\pm$ 0.18} 
        & \parbox[c]{1cm}{\centering 73.55 \\ \tiny $\pm$ 0.20}
        & \parbox[c]{1cm}{\centering 78.14 \\ \tiny $\pm$ 0.31}
        & \parbox[c]{1cm}{\centering 72.95 \\ \tiny $\pm$ 0.17}
        & \parbox[c]{1cm}{\centering 84.92 \\ \tiny $\pm$ 0.35}
        & \parbox[c]{1cm}{\centering 67.31 \\ \tiny $\pm$ 0.51}
        & \parbox[c]{1cm}{\centering 73.40 \\ \tiny $\pm$ 0.29} \\

        \midrule[0.1pt]
        FedGFM+ (Ours)
        & \parbox[c]{1cm}{\centering \textbf{83.79} \\ \tiny $\pm$ \textbf{0.27}}
        & \parbox[c]{1cm}{\centering \textbf{88.52} \\ \tiny $\pm$ \textbf{0.31}} 
        & \parbox[c]{1cm}{\centering \textbf{76.31} \\ \tiny $\pm$ \textbf{0.18}}
        & \parbox[c]{1cm}{\centering \textbf{80.70} \\ \tiny $\pm$ \textbf{0.28}}
        & \parbox[c]{1cm}{\centering \textbf{75.25} \\ \tiny $\pm$ \textbf{0.24}}
        & \parbox[c]{1cm}{\centering \textbf{89.25} \\ \tiny $\pm$ \textbf{0.13}}
        & \parbox[c]{1cm}{\centering \textbf{69.39} \\ \tiny $\pm$ \textbf{0.44}}
        & \parbox[c]{1cm}{\centering \textbf{77.68} \\ \tiny $\pm$ \textbf{0.22}} \\
        \bottomrule[1pt] 
    \end{tabular}
    }}
    \vspace{-4mm}
\end{table*}

To evaluate the effectiveness of FedGFM+, we conduct experiments on 8 benchmark graph datasets spanning a range of domains and covering three key tasks: node classification (Citation Networks: Cora, PubMed~\cite{Yang16cora}, and OGB-Arxiv~\cite{hu2020ogb}; Hyper-Link Networks: WikiCS~\cite{mernyei2020wikics}), edge classification (Knowledge Graphs: FB15K237~\cite{toutanova2015fb15k237} and WN18RR~\cite{dettmers2018dataset_wn18rr}), and graph classification (Molecule Graphs: HIV, PCBA~\cite{wu2018dataset_moleculenet}). Each dataset is partitioned into 3 clients to simulate decentralized scenarios, and we report the average test performance (accuracy or AUC) across clients. We compare FedGFM+ against three baseline categories: (1) Isolated Supervised Models, trained independently on each client, including a linear layer, GCN, GAT, GraphSAGE, and GIN; (2) FL/FGL Approaches, including general-purpose methods like FedAvg and MOON, and task-specific methods such as FedSage+, Fed-PUB, FedGTA, FedTAD, FGSSL, FGGP, GCFL+, and FedStar; and (3) Federated Variants of centralized GFM training strategies (OFA, GFT, UniGraph, GQT, GraphCLIP). More experimental details are provided in Appendix~\ref{appendix: experimental setup}.

\subsection{Performance Comparison (Answers for Q1)}
\label{sec: performance comparison}

To answer \textbf{Q1}, we compare FedGFM+ with a range of competitive baselines, evaluating each configuration over 3 independent runs without fixed seeds. As summarized in Table~\ref{tab: compare baseline}, FedGFM+ consistently achieves superior performance across all datasets and downstream tasks. 

\noindent \textbf{Comparison with Isolated Supervised Learning.}
FedGFM+ consistently outperforms supervised backbones, confirming its strong cross-domain and cross-task generalization. Specifically, it improves over the best baselines by at least 2.70\% in node classification, 2.18\% in edge classification, and 3.09\% in graph classification, demonstrating superior transferability and robustness.

\noindent \textbf{Comparison with FL/FGL Methods.}
As discussed in Section~\ref{sec: Introduction}, existing FL/FGL methods are limited by data/task heterogeneity and reliance on task-specific information, restricting its training and evaluation scenarios. In contrast, as observed, FedGFM+ consistently outperforms by enabling broad cross- domain and task collaboration that captures general structural and semantic knowledge.

\noindent \textbf{Comparison with Federated Variants of Centralized GFM.}
As observed, naive federated GFM models often suffer from knowledge entanglement, leading to them even below isolated supervised baselines (i.e., negative transfer). In contrast, FedGFM+ effectively addresses these issues via its design (i.e., AncDAI and AdaDPP), enabling efficient downstream adaptation.

\subsection{Ablation Study (Answer for Q2)} 
\label{sec: ablation study}

To address \textbf{Q2}, we analyze FedGFM+’s two key modules. \textbf{AncDAI} guides the initialization of learnable tokens in the global gVQ-VAE codebook, while \textbf{AdaDPP} is applied during fine-tuning to improve adaptability to domain- and task-specific variations. An ablation study on 8 datasets (Table~\ref{tab: ablation study}) shows that removing both modules degrades performance. Notably, excluding AncDAI causes a larger drop than excluding AdaDPP, highlighting AncDAI’s crucial role in reducing knowledge entanglement and boosting generalization. \textbf{In summary}, both are vital for FedGFM+’s effectiveness.

\begin{table*}[htbp]
    \setlength{\abovecaptionskip}{0.2cm} 
    \renewcommand{\arraystretch}{2} 
    \caption{Ablation study results for FedGFM+. Node, edge, and graph classification datasets are marked in \setlength{\fboxsep}{0pt}\colorbox{pink!50}{\strut red}, \setlength{\fboxsep}{0pt}\colorbox{yellow!50}{\strut yellow}, and \setlength{\fboxsep}{0pt}\colorbox{skyblue!50}{\strut blue}, respectively.}
    \footnotesize 
    \label{tab: ablation study}
    \resizebox{\linewidth}{18mm}{  
    \setlength{\tabcolsep}{2mm}{  
    \begin{tabular}{c|c|c|c|c|c|c|c|c}
        \toprule[1pt] 
         \diagbox{Method}{Dataset} & \setlength{\fboxsep}{0pt}\colorbox{pink!50}{\strut Cora} & \setlength{\fboxsep}{0pt}\colorbox{pink!50}{\strut PubMed} & \setlength{\fboxsep}{0pt}\colorbox{pink!50}{\strut OGB-arxiv} & \setlength{\fboxsep}{0pt}\colorbox{pink!50}{\strut WikiCS} & \setlength{\fboxsep}{0pt}\colorbox{yellow!50}{\strut FB15K237} & \setlength{\fboxsep}{0pt}\colorbox{yellow!50}{\strut WN18RR} & \setlength{\fboxsep}{0pt}\colorbox{skyblue!50}{\strut HIV} & \setlength{\fboxsep}{0pt}\colorbox{skyblue!50}{\strut PCBA} \\
        
        \midrule[0.1pt]

        FedGFM+ w/o. AncDAI
        & \parbox[c]{1cm}{\centering 81.55 \\ \tiny $\pm$ 0.22}
        & \parbox[c]{1cm}{\centering 85.56 \\ \tiny $\pm$ 0.28} 
        & \parbox[c]{1cm}{\centering 75.19 \\ \tiny $\pm$ 0.19}
        & \parbox[c]{1cm}{\centering \underline{78.05} \\ \tiny $\pm$ 0.15}
        & \parbox[c]{1cm}{\centering 73.08 \\ \tiny $\pm$ 0.31}
        & \parbox[c]{1cm}{\centering 87.61 \\ \tiny $\pm$ 0.21}
        & \parbox[c]{1cm}{\centering 67.52 \\ \tiny $\pm$ 0.11}
        & \parbox[c]{1cm}{\centering 74.81 \\ \tiny $\pm$ 0.26} \\

        FedGFM+ w/o. AdaDPP
        & \parbox[c]{1cm}{\centering \underline{83.17} \\ \tiny $\pm$ 0.18}
        & \parbox[c]{1cm}{\centering \underline{87.42} \\ \tiny $\pm$ 0.26} 
        & \parbox[c]{1cm}{\centering \underline{75.83} \\ \tiny $\pm$ 0.27}
        & \parbox[c]{1cm}{\centering 77.64 \\ \tiny $\pm$ 0.14}
        & \parbox[c]{1cm}{\centering \underline{74.59} \\ \tiny $\pm$ 0.26}
        & \parbox[c]{1cm}{\centering \underline{88.19} \\ \tiny $\pm$ 0.20}
        & \parbox[c]{1cm}{\centering \underline{67.84} \\ \tiny $\pm$ 0.29}
        & \parbox[c]{1cm}{\centering \underline{76.72} \\ \tiny $\pm$ 0.10} \\

        \midrule[0.1pt]
        FedGFM+
        & \parbox[c]{1cm}{\centering \textbf{83.79} \\ \tiny $\pm$ \textbf{0.27}}
        & \parbox[c]{1cm}{\centering \textbf{88.52} \\ \tiny $\pm$ \textbf{0.31}} 
        & \parbox[c]{1cm}{\centering \textbf{76.31} \\ \tiny $\pm$ \textbf{0.18}}
        & \parbox[c]{1cm}{\centering \textbf{80.70} \\ \tiny $\pm$ \textbf{0.28}}
        & \parbox[c]{1cm}{\centering \textbf{75.25} \\ \tiny $\pm$ \textbf{0.24}}
        & \parbox[c]{1cm}{\centering \textbf{89.25} \\ \tiny $\pm$ \textbf{0.13}}
        & \parbox[c]{1cm}{\centering \textbf{69.39} \\ \tiny $\pm$ \textbf{0.44}}
        & \parbox[c]{1cm}{\centering \textbf{77.68} \\ \tiny $\pm$ \textbf{0.22}} \\
        \bottomrule[1pt] 
    \end{tabular}
    }}
    \vspace{-2mm}
\end{table*}

\subsection{Sensitivity Analysis (Answer for Q3)}
\label{sec: sensitivity analysis}

\begin{wrapfigure}[14]{RT}{0.65\textwidth}

\centering
\subfigure[Sensitivity of AncDAI]{\includegraphics[width=0.32\textwidth]{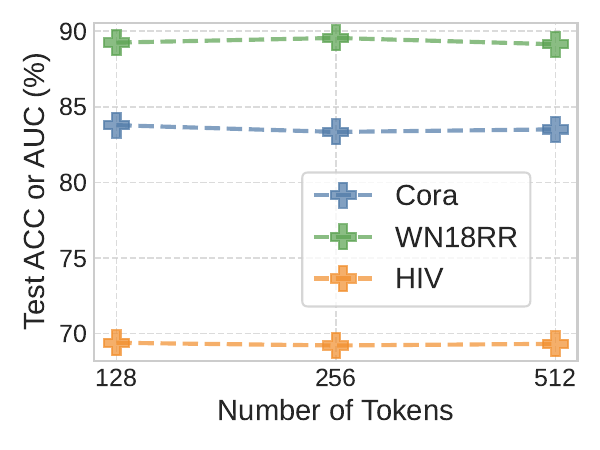}} 
\subfigure[Sensitivity of AdaDPP]{\includegraphics[width=0.32\textwidth]{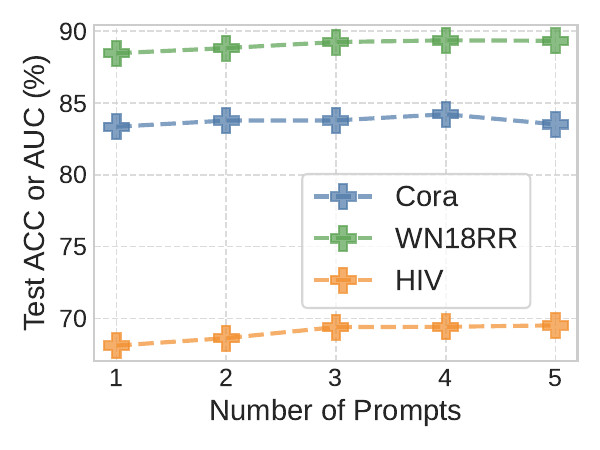}} 
\caption{Sensitivity analysis results for FedGFM+.}
\label{tab: sentitity}
\end{wrapfigure}

To address \textbf{Q3}, we perform a sensitivity analysis on key hyperparameters in FedGFM+. As a pre-training–fine-tuning framework, it involves many hyperparameters; here we focus on those in our core modules. For AncDAI, we vary the number of learnable tokens in the global gVQ-VAE codebook. For AdaDPP, we vary the number of learnable prompts per client. Results are shown in Fig.~\ref{tab: sentitity}: (a) AncDAI maintains stable performance under different codebook sizes, indicating robust domain initialization; (b) AdaDPP performs well with few prompts, and is insensitive to prompt number. Overall, FedGFM+ shows strong robustness to key hyperparameters.

\section{Limitation}
\label{sec: limitation}
While FedGFM+ adopts randomly initialized encoders and decentralized optimization to mitigate privacy leakage, we acknowledge that the exchange of high-level representations (e.g., prototypes and prompts) may still expose partial semantic information. A thorough privacy analysis, including the investigation of potential leakage pathways and the development of a threat model, remains an important direction for future work. Incorporating formal privacy guarantees would further strengthen the robustness of our approach in practical federated settings.

\section{Conclusion}
\label{sec: Conclusion}

This paper initiates the study of Federated Graph Foundation Models (FedGFM), aiming to train a unified graph model with domain and task generalization under decentralized settings. By integrating the complementary strengths of Federated Graph Learning (FGL) and centralized Graph Foundation Models (GFM) training strategies, FedGFM alleviates the limitations of both paradigms. Empirical analysis reveals a key challenge, knowledge entanglement, which limits the effectiveness of naive federated adaptations of centralized GFM training. To address this, we propose FedGFM+, a dual-perspective framework incorporating AncDAI and AdaDPP. Experimental results demonstrate the superior performance and generalization ability of FedGFM+.

\section*{Acknowledgments}

This work was supported in part by the National Key R\&D Program of China under Grant 2023YFB3001900, in part by the Shenzhen Science and Technology Program under Grant KJZD20230923113901004, in part by the National Natural Science Foundation of China under Grant 62572501.

\clearpage

\bibliographystyle{plain}
\bibliography{reference}

@String{Computer = "{IEEE} Computer" }

@inproceedings{akiba2019optuna,
  title={Optuna: A next-generation hyperparameter optimization framework},
  author={Akiba, Takuya and Sano, Shotaro and Yanase, Toshihiko and Ohta, Takeru and Koyama, Masanori},
  booktitle={Proceedings of the ACM SIGKDD Conference on Knowledge Discovery and Data Mining, KDD},
  year={2019}
}

@inproceedings{Yang16cora, 
    author = {Yang, Zhilin and Cohen, William W. and Salakhutdinov, Ruslan}, title = {Revisiting Semi-Supervised Learning with Graph Embeddings}, 
    year = {2016}, 
    booktitle={International Conference on Machine Learning, ICML}, 
}

@article{hu2020ogb,
  title={Open graph benchmark: Datasets for machine learning on graphs},
  author={Hu, Weihua and Fey, Matthias and Zitnik, Marinka and Dong, Yuxiao and Ren, Hongyu and Liu, Bowen and Catasta, Michele and Leskovec, Jure},
  journal={Advances in Neural Information Processing Systems, NeurIPS},
  year={2020}
}

@article{mernyei2020wikics,
  title={Wiki-CS: A Wikipedia-Based Benchmark for Graph Neural Networks},
  author={Mernyei, P{\'e}ter and Cangea, C{\u{a}}t{\u{a}}lina},
  journal={arXiv preprint arXiv:2007.02901},
  year={2020}
}

@article{blondel2008louvain,
  title={Fast unfolding of communities in large networks},
  author={Blondel, Vincent D and Guillaume, Jean-Loup and Lambiotte, Renaud and Lefebvre, Etienne},
  journal={Journal of Statistical Mechanics: Theory and Experiment},
  volume={2008},
  number={10},
  pages={P10008},
  year={2008},
  publisher={IOP Publishing}
}

@article{yang2019fl_survey,
  title={Federated learning},
  author={Yang, Qiang and Liu, Yang and Cheng, Yong and Kang, Yan and Chen, Tianjian and Yu, Han},
  journal={Synthesis Lectures on Artificial Intelligence and Machine Learning},
  volume={13},
  number={3},
  pages={1--207},
  year={2019},
  publisher={Morgan \& Claypool Publishers}
}

@article{mcmahan2017fedavg,
  title={Communication-efficient learning of deep networks from decentralized data},
  author={McMahan, Brendan and Moore, Eider and Ramage, Daniel and Hampson, Seth and y Arcas, Blaise Aguera},
  journal={Artificial Intelligence and Statistics},
  year={2017},
}

@inproceedings{li2021moon,
  title={Model-contrastive federated learning},
  author={Li, Qinbin and He, Bingsheng and Song, Dawn},
  booktitle={Proceedings of the IEEE/CVF Conference on Computer Vision and Pattern Recognition, CVPR},
  year={2021}
}

@article{WangFedScope_22_fsg,
author = {Wang, Zhen and Kuang, Weirui and Xie, Yuexiang and Yao, Liuyi and Li, Yaliang and Ding, Bolin and Zhou, Jingren},
title = {FederatedScope-GNN: Towards a Unified, Comprehensive and Efficient Package for Federated Graph Learning},
year = {2022},
journal = {Proceedings of the ACM SIGKDD Conference on Knowledge Discovery and Data Mining, KDD},
}

@inproceedings{he2021fedgraphnn,
  title={FedGraphNN: A Federated Learning Benchmark System for Graph Neural Networks},
  author={He, Chaoyang and Balasubramanian, Keshav and Ceyani, Emir and Yang, Carl and Xie, Han and Sun, Lichao and He, Lifang and Yang, Liangwei and Philip, S Yu and Rong, Yu and others},
  booktitle = {International Conference on Learning Representations, ICLR Workshop on Distributed and Private Machine Learning},
  year = {2021}
}

@article{xie2021gcfl,
  title={Federated graph classification over non-iid graphs},
  author={Xie, Han and Ma, Jing and Xiong, Li and Yang, Carl},
  journal={Advances in Neural Information Processing Systems, NeurIPS},
  year={2021}
}

@article{zhang2021fedsage,
  title={Subgraph federated learning with missing neighbor generation},
  author={Zhang, Ke and Yang, Carl and Li, Xiaoxiao and Sun, Lichao and Yiu, Siu Ming},
  journal={Advances in Neural Information Processing Systems, NeurIPS},
  year={2021}
}

@article{chen2021fedgl,
  title={FedGL: federated graph learning framework with global self-supervision},
  author={Chen, Chuan and Hu, Weibo and Xu, Ziyue and Zheng, Zibin},
  journal={arXiv preprint arXiv:2105.03170},
  year={2021}
}

@article{baek2022fedpub,
  title={Personalized Subgraph Federated Learning},
  author={Baek, Jinheon and Jeong, Wonyong and Jin, Jiongdao and Yoon, Jaehong and Hwang, Sung Ju},
  booktitle ={International Conference on Machine Learning, ICML},
  year={2023}
}

@inproceedings{kipf2016gcn,
  title={Semi-supervised classification with graph convolutional networks},
  author={Kipf, Thomas N and Welling, Max},
  booktitle = {International Conference on Learning Representations, ICLR},
  year = {2017},
}

@article{hamilton2017graphsage,
  title={Inductive representation learning on large graphs},
  author={Hamilton, Will and Ying, Zhitao and Leskovec, Jure},
  journal={Advances in Neural Information Processing Systems, NeurIPS},
  year={2017}
}

@inproceedings{velivckovic2017gat,
  title={Graph attention networks},
  author={Veli{\v{c}}kovi{\'c}, Petar and Cucurull, Guillem and Casanova, Arantxa and Romero, Adriana and Lio, Pietro and Bengio, Yoshua},
  booktitle = {International Conference on Learning Representations, ICLR},
  year = {2018},
}

@article{xu2018gin,
  title={How powerful are graph neural networks?},
  author={Xu, Keyulu and Hu, Weihua and Leskovec, Jure and Jegelka, Stefanie},
    booktitle={International Conference on Learning Representations, ICLR},
    year={2019}
}

@misc{brown2020gpt3,
      title={Language Models are Few-Shot Learners}, 
      author={Tom B. Brown and Benjamin Mann and Nick Ryder and Melanie Subbiah and Jared Kaplan and Prafulla Dhariwal and Arvind Neelakantan and Pranav Shyam and Girish Sastry and Amanda Askell and Sandhini Agarwal and Ariel Herbert-Voss and Gretchen Krueger and Tom Henighan and Rewon Child and Aditya Ramesh and Daniel M. Ziegler and Jeffrey Wu and Clemens Winter and Christopher Hesse and Mark Chen and Eric Sigler and Mateusz Litwin and Scott Gray and Benjamin Chess and Jack Clark and Christopher Berner and Sam McCandlish and Alec Radford and Ilya Sutskever and Dario Amodei},
      year={2020},
      eprint={2005.14165},
      archivePrefix={arXiv},
      primaryClass={cs.CL},
      url={https://arxiv.org/abs/2005.14165}, 
}

@article{achiam2023gpt4,
  title={Gpt-4 technical report},
  author={Achiam, Josh and Adler, Steven and Agarwal, Sandhini and Ahmad, Lama and Akkaya, Ilge and Aleman, Florencia Leoni and Almeida, Diogo and Altenschmidt, Janko and Altman, Sam and Anadkat, Shyamal and others},
  journal={arXiv preprint arXiv:2303.08774},
  year={2023}
}

@misc{liu2024sorareviewbackgroundtechnology,
      title={Sora: A Review on Background, Technology, Limitations, and Opportunities of Large Vision Models}, 
      author={Yixin Liu and Kai Zhang and Yuan Li and Zhiling Yan and Chujie Gao and Ruoxi Chen and Zhengqing Yuan and Yue Huang and Hanchi Sun and Jianfeng Gao and Lifang He and Lichao Sun},
      year={2024},
      eprint={2402.17177},
      archivePrefix={arXiv},
      primaryClass={cs.CV},
      url={https://arxiv.org/abs/2402.17177}, 
}

@article{li2024fedgta,
  title={Fedgta: Topology-aware averaging for federated graph learning},
  author={Li, Xunkai and Wu, Zhengyu and Zhang, Wentao and Zhu, Yinlin and Li, Rong-Hua and Wang, Guoren},
  journal={arXiv preprint arXiv:2401.11755},
  year={2024}
}

@article{huang2024fgssl,
  title={Federated graph semantic and structural learning},
  author={Huang, Wenke and Wan, Guancheng and Ye, Mang and Du, Bo},
  journal={arXiv preprint arXiv:2406.18937},
  year={2024}
}

@inproceedings{tan2023fedstar,
  title={Federated learning on non-iid graphs via structural knowledge sharing},
  author={Tan, Yue and Liu, Yixin and Long, Guodong and Jiang, Jing and Lu, Qinghua and Zhang, Chengqi},
  booktitle={Proceedings of the AAAI conference on artificial intelligence},
  volume={37},
  number={8},
  pages={9953--9961},
  year={2023}
}

@article{gfm_gft,
  title={Gft: Graph foundation model with transferable tree vocabulary},
  author={Wang, Zehong and Zhang, Zheyuan and Chawla, Nitesh and Zhang, Chuxu and Ye, Yanfang},
  journal={Advances in Neural Information Processing Systems},
  volume={37},
  pages={107403--107443},
  year={2024}
}

@article{li2024openfgl,
  title={Openfgl: A comprehensive benchmarks for federated graph learning},
  author={Li, Xunkai and Zhu, Yinlin and Pang, Boyang and Yan, Guochen and Yan, Yeyu and Li, Zening and Wu, Zhengyu and Zhang, Wentao and Li, Rong-Hua and Wang, Guoren},
  journal={arXiv preprint arXiv:2408.16288},
  year={2024}
}

@article{xia2024gfm_anygraph,
  title={AnyGraph: Graph Foundation Model in the Wild},
  author={Xia, Lianghao and Huang, Chao},
  journal={arXiv preprint arXiv:2408.10700},
  year={2024}
}

@article{gfm_gfse,
  title={GFSE: A Foundational Model For Graph Structural Encoding},
  author={Chen, Jialin and Zuo, Haolan and Wang, Haoyu Peter and Miao, Siqi and Li, Pan and Ying, Rex}
}

@article{gfm_git,
  title={Towards Graph Foundation Models: Learning Generalities Across Graphs via Task-trees},
  author={Wang, Zehong and Zhang, Zheyuan and Ma, Tianyi and Chawla, Nitesh V and Zhang, Chuxu and Ye, Yanfang}
}

@article{gfm_swapgt,
  title={Rethinking Tokenized Graph Transformers for Node Classification},
  author={Chen, Jinsong and Li, Chenyang and Li, GaiChao and Hopcroft, John E and He, Kun},
  journal={arXiv preprint arXiv:2502.08101},
  year={2025}
}

@article{gfm_opengraph,
  title={Opengraph: Towards open graph foundation models},
  author={Xia, Lianghao and Kao, Ben and Huang, Chao},
  journal={arXiv preprint arXiv:2403.01121},
  year={2024}
}

@article{gfm_ofa,
  title={One for all: Towards training one graph model for all classification tasks},
  author={Liu, Hao and Feng, Jiarui and Kong, Lecheng and Liang, Ningyue and Tao, Dacheng and Chen, Yixin and Zhang, Muhan},
  journal={arXiv preprint arXiv:2310.00149},
  year={2023}
}

@misc{gfm_gofa,
      title={GOFA: A Generative One-For-All Model for Joint Graph Language Modeling}, 
      author={Lecheng Kong and Jiarui Feng and Hao Liu and Chengsong Huang and Jiaxin Huang and Yixin Chen and Muhan Zhang},
      year={2024},
      eprint={2407.09709},
      archivePrefix={arXiv},
      primaryClass={cs.LG},
      url={https://arxiv.org/abs/2407.09709}, 
}

@inproceedings{gfm_gqt,
  title={Learning Graph Quantized Tokenizers},
  author={Wang, Limei and Hassani, Kaveh and Zhang, Si and Fu, Dongqi and Yuan, Baichuan and Cong, Weilin and Hua, Zhigang and Wu, Hao and Yao, Ning and Long, Bo},
  booktitle={The Thirteenth International Conference on Learning Representations}
}

@article{gfm_unigraph,
  title={UniGraph: Learning a Unified Cross-Domain Foundation Model for Text-Attributed Graphs},
  author={He, Yufei and Sui, Yuan and He, Xiaoxin and Hooi, Bryan},
  journal={arXiv preprint arXiv:2402.13630},
  year={2024}
}

@article{gfm_ragraph,
  title={Ragraph: A general retrieval-augmented graph learning framework},
  author={Jiang, Xinke and Qiu, Rihong and Xu, Yongxin and Zhu, Yichen and Zhang, Ruizhe and Fang, Yuchen and Xu, Chu and Zhao, Junfeng and Wang, Yasha},
  journal={Advances in Neural Information Processing Systems},
  volume={37},
  pages={29948--29985},
  year={2024}
}

@inproceedings{gfm_graphclip,
  title={Graphclip: Enhancing transferability in graph foundation models for text-attributed graphs},
  author={Zhu, Yun and Shi, Haizhou and Wang, Xiaotang and Liu, Yongchao and Wang, Yaoke and Peng, Boci and Hong, Chuntao and Tang, Siliang},
  booktitle={Proceedings of the ACM on Web Conference 2025},
  pages={2183--2197},
  year={2025}
}

@inproceedings{gfm_riemanngfm,
  title={Riemanngfm: Learning a graph foundation model from riemannian geometry},
  author={Sun, Li and Huang, Zhenhao and Zhou, Suyang and Wan, Qiqi and Peng, Hao and Yu, Philip},
  booktitle={Proceedings of the ACM on Web Conference 2025},
  pages={1154--1165},
  year={2025}
}

@inproceedings{gfm_samgpt,
  title={SAMGPT: Text-free graph foundation model for multi-domain pre-training and cross-domain adaptation},
  author={Yu, Xingtong and Gong, Zechuan and Zhou, Chang and Fang, Yuan and Zhang, Hui},
  booktitle={Proceedings of the ACM on Web Conference 2025},
  pages={1142--1153},
  year={2025}
}

@inproceedings{gfm_unigraph2,
  title={Unigraph2: Learning a unified embedding space to bind multimodal graphs},
  author={He, Yufei and Sui, Yuan and He, Xiaoxin and Liu, Yue and Sun, Yifei and Hooi, Bryan},
  booktitle={Proceedings of the ACM on Web Conference 2025},
  pages={1759--1770},
  year={2025}
}

@article{touvron2023fm_llama2,
  title={Llama 2: Open foundation and fine-tuned chat models},
  author={Touvron, Hugo and Martin, Louis and Stone, Kevin and Albert, Peter and Almahairi, Amjad and Babaei, Yasmine and Bashlykov, Nikolay and Batra, Soumya and Bhargava, Prajjwal and Bhosale, Shruti and others},
  journal={arXiv preprint arXiv:2307.09288},
  year={2023}
}

@article{chowdhery2023fm_palm,
  title={Palm: Scaling language modeling with pathways},
  author={Chowdhery, Aakanksha and Narang, Sharan and Devlin, Jacob and Bosma, Maarten and Mishra, Gaurav and Roberts, Adam and Barham, Paul and Chung, Hyung Won and Sutton, Charles and Gehrmann, Sebastian and others},
  journal={Journal of Machine Learning Research},
  volume={24},
  number={240},
  pages={1--113},
  year={2023}
}

@inproceedings{toutanova2015fb15k237,
  title={Representing text for joint embedding of text and knowledge bases},
  author={Toutanova, Kristina and Chen, Danqi and Pantel, Patrick and Poon, Hoifung and Choudhury, Pallavi and Gamon, Michael},
  booktitle={Proceedings of the 2015 conference on empirical methods in natural language processing},
  pages={1499--1509},
  year={2015}
}

@article{wu2018dataset_moleculenet,
  title={MoleculeNet: a benchmark for molecular machine learning},
  author={Wu, Zhenqin and Ramsundar, Bharath and Feinberg, Evan N and Gomes, Joseph and Geniesse, Caleb and Pappu, Aneesh S and Leswing, Karl and Pande, Vijay},
  journal={Chemical science},
  volume={9},
  number={2},
  pages={513--530},
  year={2018},
  publisher={Royal Society of Chemistry}
}

@article{bengio2013straight_through,
  title={Estimating or propagating gradients through stochastic neurons for conditional computation},
  author={Bengio, Yoshua and L{\'e}onard, Nicholas and Courville, Aaron},
  journal={arXiv preprint arXiv:1308.3432},
  year={2013}
}

@article{zhu2024fgl_fedtad,
  title={FedTAD: topology-aware data-free knowledge distillation for subgraph federated learning},
  author={Zhu, Yinlin and Li, Xunkai and Wu, Zhengyu and Wu, Di and Hu, Miao and Li, Rong-Hua},
  journal={arXiv preprint arXiv:2404.14061},
  year={2024}
}

@inproceedings{wan2024fgl_fggp,
  title={Federated graph learning under domain shift with generalizable prototypes},
  author={Wan, Guancheng and Huang, Wenke and Ye, Mang},
  booktitle={Proceedings of the AAAI conference on artificial intelligence},
  volume={38},
  number={14},
  pages={15429--15437},
  year={2024}
}

@article{bruna2013spectral,
  title={Spectral networks and locally connected networks on graphs},
  author={Bruna, Joan and Zaremba, Wojciech and Szlam, Arthur and LeCun, Yann},
  journal={arXiv preprint arXiv:1312.6203},
  year={2013}
}

@article{wu2020comprehensive,
  title={A comprehensive survey on graph neural networks},
  author={Wu, Zonghan and Pan, Shirui and Chen, Fengwen and Long, Guodong and Zhang, Chengqi and Philip, S Yu},
  journal={IEEE transactions on neural networks and learning systems},
  volume={32},
  number={1},
  pages={4--24},
  year={2020},
  publisher={IEEE}
}

@article{zhou2020graph,
  title={Graph neural networks: A review of methods and applications},
  author={Zhou, Jie and Cui, Ganqu and Hu, Shengding and Zhang, Zhengyan and Yang, Cheng and Liu, Zhiyuan and Wang, Lifeng and Li, Changcheng and Sun, Maosong},
  journal={AI Open},
  volume={1},
  pages={57--81},
  year={2020},
  publisher={Elsevier}
}

@article{fu2022fgl_survey_1,
  title={Federated graph machine learning: A survey of concepts, techniques, and applications},
  author={Fu, Xingbo and Zhang, Binchi and Dong, Yushun and Chen, Chen and Li, Jundong},
  journal={ACM SIGKDD Explorations Newsletter},
  volume={24},
  number={2},
  pages={32--47},
  year={2022},
  publisher={ACM New York, NY, USA}
}

@article{zhang2021fgl_survey_2,
  title={Federated graph learning--a position paper},
  author={Zhang, Huanding and Shen, Tao and Wu, Fei and Yin, Mingyang and Yang, Hongxia and Wu, Chao},
  journal={arXiv preprint arXiv:2105.11099},
  year={2021}
}

@article{fu2023privacy_gml_survey,
  title={Privacy-Preserving Graph Machine Learning from Data to Computation: A Survey},
  author={Fu, Dongqi and Bao, Wenxuan and Maciejewski, Ross and Tong, Hanghang and He, Jingrui},
  journal={ACM SIGKDD Explorations Newsletter},
  volume={25},
  number={1},
  pages={54--72},
  year={2023},
  publisher={ACM New York, NY, USA}
}

@inproceedings{zhang2024fgl_feddep,
  title={Deep Efficient Private Neighbor Generation for Subgraph Federated Learning},
  author={Zhang, Ke and Sun, Lichao and Ding, Bolin and Yiu, Siu Ming and Yang, Carl},
  booktitle={Proceedings of the 2024 SIAM International Conference on Data Mining (SDM)},
  pages={806--814},
  year={2024},
  organization={SIAM}
}

@article{yao2024fgl_fedgcn,
  title={FedGCN: Convergence-communication tradeoffs in federated training of graph convolutional networks},
  author={Yao, Yuhang and Jin, Weizhao and Ravi, Srivatsan and Joe-Wong, Carlee},
  journal={Advances in neural information processing systems},
  volume={36},
  year={2024}
}

@article{li2024adafgl,
  title={AdaFGL: A New Paradigm for Federated Node Classification with Topology Heterogeneity},
  author={Li, Xunkai and Wu, Zhengyu and Zhang, Wentao and Sun, Henan and Li, Rong-Hua and Wang, Guoren},
  journal={arXiv preprint arXiv:2401.11750},
  year={2024}
}

@article{zhang2023node_level_fgl_fedego,
  title={FedEgo: privacy-preserving personalized federated graph learning with ego-graphs},
  author={Zhang, Taolin and Mai, Chengyuan and Chang, Yaomin and Chen, Chuan and Shu, Lin and Zheng, Zibin},
  journal={ACM Transactions on Knowledge Discovery from Data},
  volume={18},
  number={2},
  pages={1--27},
  year={2023},
  publisher={ACM New York, NY}
}

@misc{reimers2019sentence_bert,
      title={Sentence-BERT: Sentence Embeddings using Siamese BERT-Networks}, 
      author={Nils Reimers and Iryna Gurevych},
      year={2019},
      eprint={1908.10084},
      archivePrefix={arXiv},
      primaryClass={cs.CL},
      url={https://arxiv.org/abs/1908.10084}, 
}

@inproceedings{dettmers2018dataset_wn18rr,
  title={Convolutional 2d knowledge graph embeddings},
  author={Dettmers, Tim and Minervini, Pasquale and Stenetorp, Pontus and Riedel, Sebastian},
  booktitle={Proceedings of the AAAI conference on artificial intelligence},
  volume={32},
  number={1},
  year={2018}
}

@article{xia2024anygraph,
  title={Anygraph: Graph foundation model in the wild},
  author={Xia, Lianghao and Huang, Chao},
  journal={arXiv preprint arXiv:2408.10700},
  year={2024}
}

@article{yu2024mdgpt,
  title={Text-free multi-domain graph pre-training: Toward graph foundation models},
  author={Yu, Xingtong and Zhou, Chang and Fang, Yuan and Zhang, Xinming},
  journal={arXiv preprint arXiv:2405.13934},
  year={2024}
}

@article{luo2025gfmrag,
  title={GFM-RAG: Graph Foundation Model for Retrieval Augmented Generation},
  author={Luo, Linhao and Zhao, Zicheng and Haffari, Gholamreza and Phung, Dinh and Gong, Chen and Pan, Shirui},
  journal={arXiv preprint arXiv:2502.01113},
  year={2025}
}

@inproceedings{xia2024opengraph,
  title={OpenGraph: Towards Open Graph Foundation Models},
  author={Xia, Lianghao and Kao, Ben and Huang, Chao},
  booktitle={Findings of the Association for Computational Linguistics: EMNLP 2024},
  year={2024}
}

@article{wang2025multi_mdgfm,
  title={Multi-Domain Graph Foundation Models: Robust Knowledge Transfer via Topology Alignment},
  author={Wang, Shuo and Wang, Bokui and Shen, Zhixiang and Deng, Boyan and Kang, Zhao},
  journal={International Conference on Machine Learning, ICML},
  year={2025}
}

@inproceedings{wen2023G2P2,
  title={Augmenting low-resource text classification with graph-grounded pre-training and prompting},
  author={Wen, Zhihao and Fang, Yuan},
  booktitle={Proceedings of the 46th International ACM SIGIR Conference on Research and Development in Information Retrieval, SIGIR},
  year={2023}
}

@article{lin2024langgfm,
  title={LangGFM: A Large Language Model Alone Can be a Powerful Graph Foundation Model},
  author={Lin, Tianqianjin and Yan, Pengwei and Song, Kaisong and Jiang, Zhuoren and Kang, Yangyang and Lin, Jun and Yuan, Weikang and Cao, Junjie and Sun, Changlong and Liu, Xiaozhong},
  journal={arXiv preprint arXiv:2410.14961},
  year={2024}
}

@article{zhu2025promptgfm,
  title={LLM as GNN: Graph Vocabulary Learning for Text-Attributed Graph Foundation Models},
  author={Zhu, Xi and Xue, Haochen and Zhao, Ziwei and Xu, Wujiang and Huang, Jingyuan and Guo, Minghao and Wang, Qifan and Zhou, Kaixiong and Zhang, Yongfeng},
  journal={arXiv preprint arXiv:2503.03313},
  year={2025}
}

\newpage
\section*{NeurIPS Paper Checklist}

\begin{enumerate}

\item {\bf Claims}
    \item[] Question: Do the main claims made in the abstract and introduction accurately reflect the paper's contributions and scope?
    \item[] Answer: \answerYes{} 
    \item[] Justification: The brief descriptions of our proposed method and main contributions are provided at the end of the introduction, along with the motivations and overall performance analysis discussed in both the abstract and introduction.
    \item[] Guidelines:
    \begin{itemize}
        \item The answer NA means that the abstract and introduction do not include the claims made in the paper.
        \item The abstract and/or introduction should clearly state the claims made, including the contributions made in the paper and important assumptions and limitations. A No or NA answer to this question will not be perceived well by the reviewers. 
        \item The claims made should match theoretical and experimental results, and reflect how much the results can be expected to generalize to other settings. 
        \item It is fine to include aspirational goals as motivation as long as it is clear that these goals are not attained by the paper. 
    \end{itemize}

\item {\bf Limitations}
    \item[] Question: Does the paper discuss the limitations of the work performed by the authors?
    \item[] Answer: \answerYes{} 
    \item[] Justification: We discuss the limitation of the proposed FedGFM+ framework in Sec.~\ref{sec: limitation}.
    \item[] Guidelines:
    \begin{itemize}
        \item The answer NA means that the paper has no limitation while the answer No means that the paper has limitations, but those are not discussed in the paper. 
        \item The authors are encouraged to create a separate "Limitations" section in their paper.
        \item The paper should point out any strong assumptions and how robust the results are to violations of these assumptions (e.g., independence assumptions, noiseless settings, model well-specification, asymptotic approximations only holding locally). The authors should reflect on how these assumptions might be violated in practice and what the implications would be.
        \item The authors should reflect on the scope of the claims made, e.g., if the approach was only tested on a few datasets or with a few runs. In general, empirical results often depend on implicit assumptions, which should be articulated.
        \item The authors should reflect on the factors that influence the performance of the approach. For example, a facial recognition algorithm may perform poorly when image resolution is low or images are taken in low lighting. Or a speech-to-text system might not be used reliably to provide closed captions for online lectures because it fails to handle technical jargon.
        \item The authors should discuss the computational efficiency of the proposed algorithms and how they scale with dataset size.
        \item If applicable, the authors should discuss possible limitations of their approach to address problems of privacy and fairness.
        \item While the authors might fear that complete honesty about limitations might be used by reviewers as grounds for rejection, a worse outcome might be that reviewers discover limitations that aren't acknowledged in the paper. The authors should use their best judgment and recognize that individual actions in favor of transparency play an important role in developing norms that preserve the integrity of the community. Reviewers will be specifically instructed to not penalize honesty concerning limitations.
    \end{itemize}

\item {\bf Theory assumptions and proofs}
    \item[] Question: For each theoretical result, does the paper provide the full set of assumptions and a complete (and correct) proof?
    \item[] Answer: \answerYes{} 
    \item[] Justification: We provide proofs for each introduced theorem in the paper and clearly present the assumptions, which can be found in Appendix~\ref{appendix: proof of theorems}.
    \item[] Guidelines:
    \begin{itemize}
        \item The answer NA means that the paper does not include theoretical results. 
        \item All the theorems, formulas, and proofs in the paper should be numbered and cross-referenced.
        \item All assumptions should be clearly stated or referenced in the statement of any theorems.
        \item The proofs can either appear in the main paper or the supplemental material, but if they appear in the supplemental material, the authors are encouraged to provide a short proof sketch to provide intuition. 
        \item Inversely, any informal proof provided in the core of the paper should be complemented by formal proofs provided in appendix or supplemental material.
        \item Theorems and Lemmas that the proof relies upon should be properly referenced. 
    \end{itemize}

    \item {\bf Experimental result reproducibility}
    \item[] Question: Does the paper fully disclose all the information needed to reproduce the main experimental results of the paper to the extent that it affects the main claims and/or conclusions of the paper (regardless of whether the code and data are provided or not)?
    \item[] Answer: \answerYes{} 
    \item[] Justification: We have provided detailed experimental implementation details in the Appendix regarding how we fine-tuned each dataset or method to achieve its optimal performance.
    \item[] Guidelines:
    \begin{itemize}
        \item The answer NA means that the paper does not include experiments.
        \item If the paper includes experiments, a No answer to this question will not be perceived well by the reviewers: Making the paper reproducible is important, regardless of whether the code and data are provided or not.
        \item If the contribution is a dataset and/or model, the authors should describe the steps taken to make their results reproducible or verifiable. 
        \item Depending on the contribution, reproducibility can be accomplished in various ways. For example, if the contribution is a novel architecture, describing the architecture fully might suffice, or if the contribution is a specific model and empirical evaluation, it may be necessary to either make it possible for others to replicate the model with the same dataset, or provide access to the model. In general. releasing code and data is often one good way to accomplish this, but reproducibility can also be provided via detailed instructions for how to replicate the results, access to a hosted model (e.g., in the case of a large language model), releasing of a model checkpoint, or other means that are appropriate to the research performed.
        \item While NeurIPS does not require releasing code, the conference does require all submissions to provide some reasonable avenue for reproducibility, which may depend on the nature of the contribution. For example
        \begin{enumerate}
            \item If the contribution is primarily a new algorithm, the paper should make it clear how to reproduce that algorithm.
            \item If the contribution is primarily a new model architecture, the paper should describe the architecture clearly and fully.
            \item If the contribution is a new model (e.g., a large language model), then there should either be a way to access this model for reproducing the results or a way to reproduce the model (e.g., with an open-source dataset or instructions for how to construct the dataset).
            \item We recognize that reproducibility may be tricky in some cases, in which case authors are welcome to describe the particular way they provide for reproducibility. In the case of closed-source models, it may be that access to the model is limited in some way (e.g., to registered users), but it should be possible for other researchers to have some path to reproducing or verifying the results.
        \end{enumerate}
    \end{itemize}

\item {\bf Open access to data and code}
    \item[] Question: Does the paper provide open access to the data and code, with sufficient instructions to faithfully reproduce the main experimental results, as described in supplemental material?
    \item[] Answer: \answerYes{} 
    \item[] Justification: Yes, we have included the source code in the supplementary materials to enable interested researchers to reproduce the experimental results presented in our paper with sufficient guidance.
    \item[] Guidelines:
    \begin{itemize}
        \item The answer NA means that paper does not include experiments requiring code.
        \item Please see the NeurIPS code and data submission guidelines (\url{https://nips.cc/public/guides/CodeSubmissionPolicy}) for more details.
        \item While we encourage the release of code and data, we understand that this might not be possible, so “No” is an acceptable answer. Papers cannot be rejected simply for not including code, unless this is central to the contribution (e.g., for a new open-source benchmark).
        \item The instructions should contain the exact command and environment needed to run to reproduce the results. See the NeurIPS code and data submission guidelines (\url{https://nips.cc/public/guides/CodeSubmissionPolicy}) for more details.
        \item The authors should provide instructions on data access and preparation, including how to access the raw data, preprocessed data, intermediate data, and generated data, etc.
        \item The authors should provide scripts to reproduce all experimental results for the new proposed method and baselines. If only a subset of experiments are reproducible, they should state which ones are omitted from the script and why.
        \item At submission time, to preserve anonymity, the authors should release anonymized versions (if applicable).
        \item Providing as much information as possible in supplemental material (appended to the paper) is recommended, but including URLs to data and code is permitted.
    \end{itemize}

\item {\bf Experimental setting/details}
    \item[] Question: Does the paper specify all the training and test details (e.g., data splits, hyperparameters, how they were chosen, type of optimizer, etc.) necessary to understand the results?
    \item[] Answer: \answerYes{} 
    \item[] Justification: The training details can be found in Appendix~\ref{appendix: experimental setup}.
    \item[] Guidelines:
    \begin{itemize}
        \item The answer NA means that the paper does not include experiments.
        \item The experimental setting should be presented in the core of the paper to a level of detail that is necessary to appreciate the results and make sense of them.
        \item The full details can be provided either with the code, in appendix, or as supplemental material.
    \end{itemize}

\item {\bf Experiment statistical significance}
    \item[] Question: Does the paper report error bars suitably and correctly defined or other appropriate information about the statistical significance of the experiments?
    \item[] Answer: \answerYes{} 
    \item[] Justification: We provide all our performances with error bars.
    \item[] Guidelines:
    \begin{itemize}
        \item The answer NA means that the paper does not include experiments.
        \item The authors should answer "Yes" if the results are accompanied by error bars, confidence intervals, or statistical significance tests, at least for the experiments that support the main claims of the paper.
        \item The factors of variability that the error bars are capturing should be clearly stated (for example, train/test split, initialization, random drawing of some parameter, or overall run with given experimental conditions).
        \item The method for calculating the error bars should be explained (closed form formula, call to a library function, bootstrap, etc.)
        \item The assumptions made should be given (e.g., Normally distributed errors).
        \item It should be clear whether the error bar is the standard deviation or the standard error of the mean.
        \item It is OK to report 1-sigma error bars, but one should state it. The authors should preferably report a 2-sigma error bar than state that they have a 96\% CI, if the hypothesis of Normality of errors is not verified.
        \item For asymmetric distributions, the authors should be careful not to show in tables or figures symmetric error bars that would yield results that are out of range (e.g. negative error rates).
        \item If error bars are reported in tables or plots, The authors should explain in the text how they were calculated and reference the corresponding figures or tables in the text.
    \end{itemize}

\item {\bf Experiments compute resources}
    \item[] Question: For each experiment, does the paper provide sufficient information on the computer resources (type of compute workers, memory, time of execution) needed to reproduce the experiments?
    \item[] Answer: \answerYes{} 
    \item[] Justification: Please refer to Appendix~\ref{appendix: experimental setup} for details regarding to our computing resources.
    \item[] Guidelines:
    \begin{itemize}
        \item The answer NA means that the paper does not include experiments.
        \item The paper should indicate the type of compute workers CPU or GPU, internal cluster, or cloud provider, including relevant memory and storage.
        \item The paper should provide the amount of compute required for each of the individual experimental runs as well as estimate the total compute. 
        \item The paper should disclose whether the full research project required more compute than the experiments reported in the paper (e.g., preliminary or failed experiments that didn't make it into the paper). 
    \end{itemize}
    
\item {\bf Code of ethics}
    \item[] Question: Does the research conducted in the paper conform, in every respect, with the NeurIPS Code of Ethics \url{https://neurips.cc/public/EthicsGuidelines}?
    \item[] Answer: \answerYes{} 
    \item[] Justification: We are hereby comply to the guidelines illustrated in the NeurIPS’ Code of Ethics.
    \item[] Guidelines:
    \begin{itemize}
        \item The answer NA means that the authors have not reviewed the NeurIPS Code of Ethics.
        \item If the authors answer No, they should explain the special circumstances that require a deviation from the Code of Ethics.
        \item The authors should make sure to preserve anonymity (e.g., if there is a special consideration due to laws or regulations in their jurisdiction).
    \end{itemize}

\item {\bf Broader impacts}
    \item[] Question: Does the paper discuss both potential positive societal impacts and negative societal impacts of the work performed?
    \item[] Answer: \answerNA{} 
    \item[] Justification: There is no social impact associated with the work we performed and presented in this paper.
    \item[] Guidelines:
    \begin{itemize}
        \item The answer NA means that there is no societal impact of the work performed.
        \item If the authors answer NA or No, they should explain why their work has no societal impact or why the paper does not address societal impact.
        \item Examples of negative societal impacts include potential malicious or unintended uses (e.g., disinformation, generating fake profiles, surveillance), fairness considerations (e.g., deployment of technologies that could make decisions that unfairly impact specific groups), privacy considerations, and security considerations.
        \item The conference expects that many papers will be foundational research and not tied to particular applications, let alone deployments. However, if there is a direct path to any negative applications, the authors should point it out. For example, it is legitimate to point out that an improvement in the quality of generative models could be used to generate deepfakes for disinformation. On the other hand, it is not needed to point out that a generic algorithm for optimizing neural networks could enable people to train models that generate Deepfakes faster.
        \item The authors should consider possible harms that could arise when the technology is being used as intended and functioning correctly, harms that could arise when the technology is being used as intended but gives incorrect results, and harms following from (intentional or unintentional) misuse of the technology.
        \item If there are negative societal impacts, the authors could also discuss possible mitigation strategies (e.g., gated release of models, providing defenses in addition to attacks, mechanisms for monitoring misuse, mechanisms to monitor how a system learns from feedback over time, improving the efficiency and accessibility of ML).
    \end{itemize}
    
\item {\bf Safeguards}
    \item[] Question: Does the paper describe safeguards that have been put in place for responsible release of data or models that have a high risk for misuse (e.g., pretrained language models, image generators, or scraped datasets)?
    \item[] Answer: \answerNA{} 
    \item[] Justification: Our paper does not pose such risks for being misused for malicious intents.
    \item[] Guidelines:
    \begin{itemize}
        \item The answer NA means that the paper poses no such risks.
        \item Released models that have a high risk for misuse or dual-use should be released with necessary safeguards to allow for controlled use of the model, for example by requiring that users adhere to usage guidelines or restrictions to access the model or implementing safety filters. 
        \item Datasets that have been scraped from the Internet could pose safety risks. The authors should describe how they avoided releasing unsafe images.
        \item We recognize that providing effective safeguards is challenging, and many papers do not require this, but we encourage authors to take this into account and make a best faith effort.
    \end{itemize}

\item {\bf Licenses for existing assets}
    \item[] Question: Are the creators or original owners of assets (e.g., code, data, models), used in the paper, properly credited and are the license and terms of use explicitly mentioned and properly respected?
    \item[] Answer: \answerYes{} 
    \item[] Justification: For all existing works mentioned in our paper, whether for illustrative purposes or as baselines for performance comparison, we have provided proper citations and references to acknowledge their contributions.
    \item[] Guidelines:
    \begin{itemize}
        \item The answer NA means that the paper does not use existing assets.
        \item The authors should cite the original paper that produced the code package or dataset.
        \item The authors should state which version of the asset is used and, if possible, include a URL.
        \item The name of the license (e.g., CC-BY 4.0) should be included for each asset.
        \item For scraped data from a particular source (e.g., website), the copyright and terms of service of that source should be provided.
        \item If assets are released, the license, copyright information, and terms of use in the package should be provided. For popular datasets, \url{paperswithcode.com/datasets} has curated licenses for some datasets. Their licensing guide can help determine the license of a dataset.
        \item For existing datasets that are re-packaged, both the original license and the license of the derived asset (if it has changed) should be provided.
        \item If this information is not available online, the authors are encouraged to reach out to the asset's creators.
    \end{itemize}

\item {\bf New assets}
    \item[] Question: Are new assets introduced in the paper well documented and is the documentation provided alongside the assets?
    \item[] Answer: \answerYes{} 
    \item[] Justification: For the purpose of evaluating our methods, we have developed new code and included it in the supplementary materials when submitting our work to OpenReview. This work does not introduce any new assets such as datasets.
    \item[] Guidelines:
    \begin{itemize}
        \item The answer NA means that the paper does not release new assets.
        \item Researchers should communicate the details of the dataset/code/model as part of their submissions via structured templates. This includes details about training, license, limitations, etc. 
        \item The paper should discuss whether and how consent was obtained from people whose asset is used.
        \item At submission time, remember to anonymize your assets (if applicable). You can either create an anonymized URL or include an anonymized zip file.
    \end{itemize}

\item {\bf Crowdsourcing and research with human subjects}
    \item[] Question: For crowdsourcing experiments and research with human subjects, does the paper include the full text of instructions given to participants and screenshots, if applicable, as well as details about compensation (if any)? 
    \item[] Answer: \answerNA{} 
    \item[] Justification: We do not include such research or experiments in our work.
    \item[] Guidelines:
    \begin{itemize}
        \item The answer NA means that the paper does not involve crowdsourcing nor research with human subjects.
        \item Including this information in the supplemental material is fine, but if the main contribution of the paper involves human subjects, then as much detail as possible should be included in the main paper. 
        \item According to the NeurIPS Code of Ethics, workers involved in data collection, curation, or other labor should be paid at least the minimum wage in the country of the data collector. 
    \end{itemize}

\item {\bf Institutional review board (IRB) approvals or equivalent for research with human subjects}
    \item[] Question: Does the paper describe potential risks incurred by study participants, whether such risks were disclosed to the subjects, and whether Institutional Review Board (IRB) approvals (or an equivalent approval/review based on the requirements of your country or institution) were obtained?
    \item[] Answer: \answerNA{} 
    \item[] Justification: This paper does not involve crowdsourcing nor research with human subjects.
    \item[] Guidelines:
    \begin{itemize}
        \item The answer NA means that the paper does not involve crowdsourcing nor research with human subjects.
        \item Depending on the country in which research is conducted, IRB approval (or equivalent) may be required for any human subjects research. If you obtained IRB approval, you should clearly state this in the paper. 
        \item We recognize that the procedures for this may vary significantly between institutions and locations, and we expect authors to adhere to the NeurIPS Code of Ethics and the guidelines for their institution. 
        \item For initial submissions, do not include any information that would break anonymity (if applicable), such as the institution conducting the review.
    \end{itemize}

\item {\bf Declaration of LLM usage}
    \item[] Question: Does the paper describe the usage of LLMs if it is an important, original, or non-standard component of the core methods in this research? Note that if the LLM is used only for writing, editing, or formatting purposes and does not impact the core methodology, scientific rigorousness, or originality of the research, declaration is not required.
    \item[] Answer: \answerNA{} 
    \item[] Justification: LLM is not an important, original, or
non-standard component of the core methods in this research.
    \item[] Guidelines:
    \begin{itemize}
        \item The answer NA means that the core method development in this research does not involve LLMs as any important, original, or non-standard components.
        \item Please refer to our LLM policy (\url{https://neurips.cc/Conferences/2025/LLM}) for what should or should not be described.
    \end{itemize}

\end{enumerate}

\newpage
\appendix

\section*{Appendix: Table of Content}


\titlecontents{psection}[3em]{\bfseries}{\contentslabel{2em}}{\hspace*{-2em}}{\titlerule*[0.5pc]{.}\contentspage}

\titlecontents{psubsection}[5em]{}{\contentslabel{2em}}{\hspace*{-2em}}{\titlerule*[0.5pc]{.}\contentspage}

\startcontents[appendices]
\printcontents[appendices]{p}{1}{\setcounter{tocdepth}{2}}

\newpage

\section{More Related Works}
\label{appendix: related work}

\paragraph{Graph Neural Networks.}
Earlier research on deep graph learning extends convolution to handle graphs~\cite{bruna2013spectral} but comes with notable parameter counts. To this end, GCN~\cite{kipf2016gcn} simplifies graph convolution by utilizing a 1-order Chebyshev filter to capture local neighborhood information. Moreover, GAT~\cite{velivckovic2017gat} adopts graph attention, allowing weighted aggregation. GraphSAGE~\cite{hamilton2017graphsage} introduces a variety of learnable aggregation functions for performing message aggregation. Moreover, GIN~\cite{xu2018gin} aims to preserve structural information maximally and theoretically proves its discriminative power matches the Weisfeiler-Lehman graph isomorphism test. Further details on GNN research can be found in surveys ~\cite{wu2020comprehensive,zhou2020graph}.

\paragraph{Federated Graph Learning.} Motivated by the success of federated learning in computer vision and natural language processing~\cite{yang2019fl_survey} and the demand for distributed graph learning, FGL has gained increasing attention. From the data and task perspectives, FGL studies are categorized into three settings: (1) \textbf{Graph-level FGL}, where each client collects multiple graphs for graph-level tasks, like graph classification. The main challenge is avoiding interference between clients' graph datasets, especially in multi-domain settings. For example, GCFL+~\cite{xie2021gcfl} introduces a GNN gradient pattern-aware technique for dynamic client clustering to reduce conflicts from structural and feature heterogeneity. (2) \textbf{Subgraph-level FGL}, where each client holds a subgraph of a global graph for node-level tasks like node classification. The key challenges are \textit{subgraph heterogeneity} and \textit{missing edges}. Fed-PUB~\cite{baek2022fedpub} addresses heterogeneity by enhancing local GNNs with random graph embeddings and personalized sparse masks for selective aggregation. FedGTA~\cite{li2024fedgta} encodes topology into smoothing confidence and graph moments to improve model aggregation. Other studies~\cite{li2024adafgl, huang2024fgssl, wan2024fgl_fggp, zhu2024fgl_fedtad} also achieve strong results on this challenge. To address missing edges, FedSage+~\cite{zhang2021fedsage} integrates node representations, topology, and labels across subgraphs, training a neighbor generator to restore missing links and achieve robust subgraph-FL. Other works~\cite{chen2021fedgl, yao2024fgl_fedgcn, zhang2024fgl_feddep} also excel in this area. (3) \textbf{Node-level FGL}, where each client collects one or multiple ego-networks for node- and edge-level tasks. From the perspective of data format and task, Node-level FGL can be seen as a special case of Subgraph-level FGL. Notably, the application scenarios of Node-level FGL usually involve strict privacy constraints, and representative methods include FedEgo~\cite{zhang2023node_level_fgl_fedego}. Detailed insights into FGL research are available in surveys~\cite{fu2022fgl_survey_1, zhang2021fgl_survey_2, fu2023privacy_gml_survey} and benchmark studies~\cite{he2021fedgraphnn, WangFedScope_22_fsg, li2024openfgl}.

\textbf{Language-Oriented GFMs}~\cite{lin2024langgfm,zhu2025promptgfm,gfm_unigraph,gfm_gofa}.
    These approaches transform graph structures into linearized textual sequences by encoding nodes and edges using syntactically structured templates.
    The resulting representations can then be processed by token-based encoders—typically LLMs—that are pre-trained on vast corpora of natural language.
    This approach allows for seamless integration with existing LLM infrastructure and leverages the powerful contextual understanding capabilities developed through natural language processing (NLP).
    In more detail, during the pre-training phase, these models optimize the parameters of the embedding function—often realized as an LLM—through conventional NLP objectives such as next-token prediction or masked language modeling. 
    These objectives encourage the model to learn coherent semantic representations from the flattened graph text, effectively transferring linguistic inductive biases to graph representation learning.
    However, despite their ability to inherit the expressive power of LLMs, language-oriented GFMs face intrinsic limitations. 
    The transformation from graph to text inevitably introduces information loss, especially concerning structural properties such as node connectivity and subgraph patterns.
    Moreover, this flattening process may distort the original graph topology in ways that are not easily reversible, thereby affecting downstream tasks that rely on accurate structural reasoning. 
    Additionally, scalability becomes a concern due to the growing length of textual sequences with increasing graph size, which may lead to inefficiencies in both computation and memory usage.
    
    \textbf{Graph-Oriented GFMs}~\cite{xia2024anygraph,yu2024mdgpt,luo2025gfmrag,gfm_gfse,xia2024opengraph,gfm_ofa,gfm_git,wang2025multi_mdgfm,gfm_gft,wen2023G2P2,gfm_graphclip,gfm_riemanngfm,gfm_samgpt}.
    These approaches aim to preserve both the semantic richness of textual attributes and the integrity of graph topology through purpose-built architectures. 
    These models typically adopt a hybrid design, wherein a frozen LLM is used to extract high-quality textual embeddings from node and edge features, while a trainable GNN component handles the aggregation and propagation of information across the graph structure.
    This dual-component architecture enables the model to benefit from the strong language understanding capabilities of LLMs without compromising the fidelity of graph structure. 
    The GNN backbone ensures that topological relationships are explicitly modeled, allowing for effective message passing and relational reasoning.
    During the pre-training stage, graph-oriented GFMs often incorporate self-supervised learning strategies, such as graph reconstruction or contrastive learning objectives, which help the model capture invariant and transferable representations across diverse domains. 
    These tasks encourage the model to learn a unified representation space where both textual and structural semantics are coherently aligned, leading to better generalization on downstream tasks involving heterogeneous graph data.
    By preserving the native structure of graphs and leveraging the representational power of modern neural architectures, graph-oriented GFMs offer a promising direction toward building robust and scalable foundation models for graph-centric machine learning.

\section{Theoretical Proof}
\label{appendix: proof of theorems}

In this section, we provide theoretical analysis for the distinguishability of domain prototypes under random initialization  (Theorem.~\ref{thm: rand-separability}) and the semantic separability of gVQ-VAE codebooks initialized via AncDAI (Theorem.~\ref{thm: codebook}).

\begin{theorem}[Domain Prototype Distinguishability]
\label{thm: rand-separability}
Let $G^a = (\mathcal{V}^a, \mathcal{E}^a)$ and $G^b = (\mathcal{V}^b, \mathcal{E}^b)$ denote local graphs from two clients belonging to different domains, with node features $\mathbf{X}^a, \mathbf{X}^b \in \mathbb{R}^{n \times d}$ and adjacency matrices $\mathbf{A}^a, \mathbf{A}^b \in \mathbb{R}^{n \times n}$. Let $f_\mathbf{\theta}^{\text{glb}}$ be the parameters of a randomly initialized $L$-layer global GNN-Encoder, which is broadcast to all clients for local initialization. The domain prototype is computed with Eqs.~(\ref{eq: encode}) and~(\ref{eq: domain prototype}):
\begin{equation}
    \mathbf{p}^a = \frac{1}{n} \sum_{i=1}^n f_\mathbf{\theta}^{\text{glb}}(\mathbf{A}^a, \mathbf{X}^a)_i, \quad 
    \mathbf{p}^b = \frac{1}{n} \sum_{i=1}^n f_\mathbf{\theta}^{\text{glb}}(\mathbf{A}^b, \mathbf{X}^b)_i.
\end{equation}
Then, there exists a constant $\alpha > 0$, whose value depends on the architecture and depth $L$ of GNN-Encoder), such that:
\begin{equation}
    \mathbb{E}_{\theta} \left[ \left\| \mathbf{p}^a - \mathbf{p}^b \right\|_2^2 \right]
    \geq \alpha \cdot \left( \left\| \mathbf{X}^a - \mathbf{X}^b \right\|_F^2 
    + \left\| \mathbf{A}^a - \mathbf{A}^b \right\|_F^2 \right).
\end{equation}
\end{theorem}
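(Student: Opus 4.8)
The plan is to treat the randomly initialized encoder as a random feature map and exploit the fact that such maps are approximately isometric in expectation over the weights. Concretely, I would first unroll the $L$-layer encoder, writing each layer as $\mathbf{H}^{(l)} = \sigma(\mathbf{A}\mathbf{H}^{(l-1)}\mathbf{W}^{(l)})$ with $\mathbf{H}^{(0)}=\mathbf{X}$, so that the prototype is the mean-pooled output $\mathbf{p} = \tfrac{1}{n}\mathbf{1}^\top\mathbf{H}^{(L)}$. The difference $\mathbf{p}^a-\mathbf{p}^b$ is then a function of the two inputs that I would decompose, via a telescoping (hybrid) argument, into a contribution driven by the feature gap $\mathbf{X}^a-\mathbf{X}^b$ (holding the adjacency fixed) and a contribution driven by the topology gap $\mathbf{A}^a-\mathbf{A}^b$ (holding the features fixed), plus higher-order cross terms.

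The core tool is the elementary identity that for a weight matrix $\mathbf{W}\in\mathbb{R}^{d\times d'}$ with i.i.d. zero-mean entries of variance $\sigma_w^2$, any fixed row vector $\mathbf{v}$ satisfies $\mathbb{E}_\mathbf{W}\!\left[\|\mathbf{v}\mathbf{W}\|_2^2\right]=\sigma_w^2\,d'\,\|\mathbf{v}\|_2^2$, so a random linear map preserves squared norm up to a fixed multiplicative constant. For the nonlinear layers I would invoke the corresponding anti-concentration property of random-feature activations (for ReLU this is captured by the arc-cosine kernel): $\mathbb{E}_\mathbf{W}\!\left[\|\sigma(\mathbf{u}\mathbf{W})-\sigma(\mathbf{v}\mathbf{W})\|_2^2\right]\ge \kappa\,\|\mathbf{u}-\mathbf{v}\|_2^2$ for a constant $\kappa>0$ depending only on $\sigma$ and $\sigma_w$. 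Composing these bounds across the $L$ layers yields a lower bound of the form $\mathbb{E}_\theta[\|\mathbf{p}^a-\mathbf{p}^b\|_2^2]\ge c^{L}\,\|\mathbf{M}\|_2^2$, where $\mathbf{M}$ is the mean-pooled difference of the propagated inputs and $c$ aggregates the per-layer variance and isometry constants; the architecture- and depth-dependent $\alpha$ is then read off as a product of these constants.

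Since the feature- and topology-induced contributions are pushed through independent, zero-mean layer weights, I expect their cross term to vanish under $\mathbb{E}_\theta[\cdot]$, which lets me lower bound the total by the sum of the two individual contributions rather than by a single pooled quantity; this is what produces the additive right-hand side $\|\mathbf{X}^a-\mathbf{X}^b\|_F^2+\|\mathbf{A}^a-\mathbf{A}^b\|_F^2$. The final step is to relate each individual contribution back to the raw input gaps, bounding the $\mathbf{A}^L\mathbf{X}$-type propagation factors below by the appropriate singular values of the propagation operator.

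The main obstacle is the mean-pooling step: averaging over nodes is a severe compression, and in the worst case the feature or topology gap can lie in (or near) the kernel of the combined propagation-and-pooling operator, which would collapse the lower bound. Making the argument rigorous therefore requires either a mild non-degeneracy assumption—so that $\mathbf{A}^a-\mathbf{A}^b$ and $\mathbf{X}^a-\mathbf{X}^b$ retain a bounded-below component outside this kernel—or controlling the smallest relevant singular value of the propagation operator so that it is uniformly bounded away from zero. Absorbing that smallest singular value into $\alpha$ is the delicate part, and it is where the ``different domains'' hypothesis does real work: it guarantees the input discrepancies are structurally rich enough to survive propagation and pooling.
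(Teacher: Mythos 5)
Your plan takes a genuinely different, and considerably more substantive, route than the paper. The paper's proof is two asserted inequalities: first it lower-bounds $\mathbb{E}_\theta\bigl[\|\tfrac{1}{n}\sum_i(\mathbf{z}_i^a-\mathbf{z}_i^b)\|_2^2\bigr]$ by $\tfrac{1}{n^2}\sum_i\mathbb{E}_\theta\bigl[\|\mathbf{z}_i^a-\mathbf{z}_i^b\|_2^2\bigr]$ (i.e., it drops the cross terms $\langle \mathbf{z}_i^a-\mathbf{z}_i^b,\;\mathbf{z}_j^a-\mathbf{z}_j^b\rangle$ as if they were nonnegative, with no justification), and then it asserts in a single step that the per-node embedding gap dominates $\alpha(\|\mathbf{X}^a-\mathbf{X}^b\|_F^2+\|\mathbf{A}^a-\mathbf{A}^b\|_F^2)$, with no layer-wise argument at all. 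Your layer-by-layer composition via random-feature isometry and arc-cosine-kernel lower bounds is exactly the machinery one would need to make the paper's second step rigorous, and your identification of the mean-pooling obstacle is precisely the issue the paper's first step silently assumes away: node-level differences can cancel under averaging, so the pooled bound genuinely requires a non-degeneracy hypothesis that neither the theorem statement nor the paper's proof supplies. In that sense your proposal is more honest about where the result actually stands.

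Two caveats on your own plan. First, the claim that the feature-driven and topology-driven contributions have vanishing cross term because they are ``pushed through independent, zero-mean layer weights'' does not hold as stated: both contributions in your hybrid decomposition pass through the \emph{same} weight matrices $\mathbf{W}^{(l)}$, so the two terms are dependent and $\mathbb{E}_\theta[\langle \Delta_{\mathbf{X}},\Delta_{\mathbf{A}}\rangle]$ need not vanish; obtaining the additive right-hand side requires a separate orthogonality or reverse-triangle argument. Second, the hybrid/telescoping decomposition naturally yields upper bounds (via the triangle inequality), so converting it into a lower bound on the sum of the two gaps is itself nontrivial. Neither issue is resolved in the paper either, so these are gaps in the theorem as much as in your proof, but you should not present the cross-term cancellation as automatic.
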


\begin{proof}

Let $\mathbf{z}_i^a = f_{\boldsymbol{\theta}}^{\mathrm{glb}}(\mathbf{A}^a, \mathbf{X}^a)_i$ and $\mathbf{z}_i^b = f_{\boldsymbol{\theta}}^{\mathrm{glb}}(\mathbf{A}^b, \mathbf{X}^b)_i$ denote the representations of node $i$ obtained from a frozen GNN applied to graphs $a$ and $b$, respectively. Here, the GNN parameters $\boldsymbol{\theta}$ are randomly initialized and held fixed. Under this setting, the GNN's computations can be interpreted as performing random but deterministic linear transformations and message passing operations. Leveraging the linearity of expectation and the independence of random initialization, the expected squared Euclidean distance between the resulting node prototypes can be expressed as:
\begin{align}
    \mathbb{E}_\theta \left[\left\| \mathbf{p}^a - \mathbf{p}^b \right\|_2^2\right] 
    &= \mathbb{E}_\theta \left[\left\| \frac{1}{n} \sum_{i=1}^n \left( \mathbf{z}_i^a - \mathbf{z}_i^b \right) \right\|_2^2 \right] \\
    &\geq \frac{1}{n^2} \sum_{i=1}^n \mathbb{E}_\theta \left[ \left\| \mathbf{z}_i^a - \mathbf{z}_i^b \right\|_2^2 \right] \\
    &\geq \alpha \cdot \left( \left\| \mathbf{X}^a - \mathbf{X}^b \right\|_F^2 
    + \left\| \mathbf{A}^a - \mathbf{A}^b \right\|_F^2 \right).
\end{align}
\end{proof}

\begin{theorem}[Semantic Separability of AncDAI-Initialized Codebook]
\label{thm: codebook}
Let $\{\mathbf{p}^k\}_{k=1}^{K}$ be the set of domain prototypes uploaded from $K$ clients. For each prototype $\mathbf{p}^k$, we generate a set of perturbed vectors via Eq.~\ref{eq: anc}:
\begin{equation}
    \tilde{\mathbf{p}}_i^k = \mathbf{p}^k + \sigma \boldsymbol{\epsilon}_i, \quad 
    \boldsymbol{\epsilon}_i \sim \mathcal{N}(\mathbf{0}, \mathbf{I}), \quad i=1,\dots.H.
\end{equation}
Let $\mathcal{C}^{\text{perturb}}$ and $\mathcal{C}^{\text{rand}}$ be codebooks constructed respectively from perturbed prototypes and from standard Gaussian initialization. Then for any two domains $a \neq b$ and respective node embeddings $\mathbf{z}^a$, $\mathbf{z}^b$ (drawn from $f_\theta(\mathbf{A}, \mathbf{X})$), we have:
\begin{equation}
    \mathbb{P}\left[ \text{code}(\mathbf{z}^a; \mathcal{C}^{\text{perturb}}) \neq \text{code}(\mathbf{z}^b; \mathcal{C}^{\text{perturb}}) \right] 
    \geq \mathbb{P}\left[ \text{code}(\mathbf{z}^a; \mathcal{C}^{\text{rand}}) \neq \text{code}(\mathbf{z}^b; \mathcal{C}^{\text{rand}}) \right],
\end{equation}
i.e., the perturbation-initialized codebook yields higher domain-level separability.
\end{theorem}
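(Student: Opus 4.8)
The plan is to prove the equivalent statement about \emph{collision} probabilities: writing $E_{\mathcal{C}} = \{\text{code}(\mathbf{z}^a;\mathcal{C}) = \text{code}(\mathbf{z}^b;\mathcal{C})\}$, the claim is equivalent to $\mathbb{P}[E_{\mathcal{C}^{\text{perturb}}}] \le \mathbb{P}[E_{\mathcal{C}^{\text{rand}}}]$, since two embeddings receive different codes exactly when they do not collide, the probability being taken over the random draw of the codebook. The guiding intuition is that the perturbed codebook places its $KH$ codewords into $K$ tight clusters, one around each domain prototype $\mathbf{p}^k$, whereas the random codebook scatters the same number of codewords isotropically with no domain alignment. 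I would therefore (i) control the geometry of each codebook, (ii) relate the nearest-neighbor assignment of $\mathbf{z}^a,\mathbf{z}^b$ to the cluster structure, and (iii) compare the two collision probabilities.

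First I would record that each perturbed codeword satisfies $\tilde{\mathbf{p}}^k_i = \mathbf{p}^k + \sigma\boldsymbol{\epsilon}_i$ with $\boldsymbol{\epsilon}_i\sim\mathcal{N}(\mathbf{0},\mathbf{I})$, so with probability at least $1-\eta$ every codeword of cluster $k$ lies within radius $r_\sigma = \sigma(\sqrt{d}+\sqrt{2\log(KH/\eta)})$ of $\mathbf{p}^k$, by a standard Gaussian norm-concentration bound. Meanwhile Theorem~\ref{thm: rand-separability} guarantees a separation $\gamma := \|\mathbf{p}^a-\mathbf{p}^b\|_2$ bounded below by the intrinsic feature and topology discrepancy between the two domains. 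Since $\mathbf{z}^a$ is one of the node embeddings averaged to form $\mathbf{p}^a$ (and likewise for $b$), I would next argue that each embedding lies near its own prototype, $\|\mathbf{z}^a-\mathbf{p}^a\|_2 \le \rho$, using concentration of the per-node embeddings around their mean; this step requires a bounded-variance assumption on the node-level representations.

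Given these facts, the nearest codeword to $\mathbf{z}^a$ lies in cluster $a$ and the nearest to $\mathbf{z}^b$ lies in cluster $b$ whenever $\gamma > 2(r_\sigma+\rho)$, because any cross-cluster codeword is then strictly farther than the best in-cluster codeword. Hence $\mathbb{P}[E_{\mathcal{C}^{\text{perturb}}}]$ is bounded above by the failure probability of these concentration events, decaying like $\exp(-c\,\gamma^2/\sigma^2)$ once $\sigma$ is small relative to $\gamma$. For the random codebook I would establish a matching \emph{lower} bound on the collision probability: with i.i.d.\ $\mathcal{N}(\mathbf{0},\mathbf{I})$ codewords no codeword is systematically placed between $\mathbf{p}^a$ and $\mathbf{p}^b$, so the event that a single codeword is simultaneously nearest to both $\mathbf{z}^a$ and $\mathbf{z}^b$ occurs with probability at least a constant $\delta_{\text{rand}}>0$ that does not shrink with $\gamma$. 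Chaining $\mathbb{P}[E_{\mathcal{C}^{\text{perturb}}}]\le e^{-c\gamma^2/\sigma^2} \le \delta_{\text{rand}} \le \mathbb{P}[E_{\mathcal{C}^{\text{rand}}}]$ then yields the theorem in the regime of small $\sigma$ and well-separated domains.

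The hard part will be the lower bound on the random-codebook collision probability: unlike the perturbed case, which reduces to clean Gaussian tail bounds around known centers, here I must show that isotropic codewords fail to separate $\mathbf{z}^a$ from $\mathbf{z}^b$ as reliably as clustered ones. A rigorous treatment would characterize the Voronoi collision probability for i.i.d.\ Gaussian codewords as a function of $\|\mathbf{z}^a-\mathbf{z}^b\|_2$ and the codeword density, perhaps via a coupling that maps the random codewords onto the perturbed ones and tracks which reassignments can only increase collisions. I would also make explicit the assumptions ($\sigma$ small relative to $\gamma$, bounded per-node embedding variance, and enough codewords $H$ per cluster) under which the comparison holds, since for large $\sigma$ the perturbed clusters merge and the advantage disappears.
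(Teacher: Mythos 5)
Your proposal follows essentially the same route as the paper's proof: concentrate the perturbed codewords in tight clusters around the well-separated prototypes, concentrate each domain's embeddings around its own prototype, apply the triangle inequality to keep cross-domain assignments apart for sufficiently small $\sigma$, and then compare against an isotropic random codebook --- the only cosmetic difference being that the paper phrases everything in terms of cosine/angular distance while you use Euclidean norms and explicit Gaussian tail radii. Notably, the step you flag as the hard unfinished part (a genuine lower bound on the collision probability of the i.i.d.\ Gaussian codebook) is precisely the step the paper's own proof asserts in one sentence without justification, so your plan is, if anything, more candid about where the argument remains incomplete.
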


\begin{proof}
We adopt a quantization function based on cosine similarity:
\[
\text{code}(\mathbf{z}; \mathcal{C}) = \arg\max_{\mathbf{c} \in \mathcal{C}} \frac{\mathbf{z}^\top \mathbf{c}}{\|\mathbf{z}\|_2 \|\mathbf{c}\|_2},
\]
which assigns each embedding to the codebook vector with the smallest angular distance.

Assume that the domain prototypes $\{\mathbf{p}^k\}_{k=1}^K$ satisfy a minimal angular separation:
\[
\min_{a \neq b} \arccos\left(\frac{(\mathbf{p}^a)^\top \mathbf{p}^b}{\|\mathbf{p}^a\|_2 \|\mathbf{p}^b\|_2}\right) \geq \delta > 0.
\]
The perturbed codebook $\mathcal{C}^{\mathrm{perturb}}$ is formed by adding isotropic Gaussian noise $\sigma \boldsymbol{\epsilon}$ to each prototype, with $\boldsymbol{\epsilon} \sim \mathcal{N}(\mathbf{0}, \mathbf{1})$. For sufficiently small $\sigma$, the perturbations preserve the cluster structure, yielding distinct codebook clusters separated by angles close to $\delta$.

Node embeddings $\mathbf{z}^a$ and $\mathbf{z}^b$ sampled from different domains concentrate in neighborhoods around their respective prototypes. Formally, with high probability,
\[
\arccos\left(\frac{(\mathbf{z}^a)^\top \mathbf{p}^a}{\|\mathbf{z}^a\|_2 \|\mathbf{p}^a\|_2}\right) \leq \epsilon, \quad
\arccos\left(\frac{(\mathbf{z}^b)^\top \mathbf{p}^b}{\|\mathbf{z}^b\|_2 \|\mathbf{p}^b\|_2}\right) \leq \epsilon,
\]
for some small $\epsilon > 0$. Then. by the triangle inequality on the unit sphere,
\[
\arccos\left(\frac{(\mathbf{z}^a)^\top \mathbf{z}^b}{\|\mathbf{z}^a\|_2 \|\mathbf{z}^b\|_2}\right) \geq \delta - 2\epsilon,
\]
which implies that embeddings from distinct domains remain well-separated.

Therefore, the probability that $\mathbf{z}^a$ and $\mathbf{z}^b$ are assigned to the same codeword under $\mathcal{C}^{\mathrm{perturb}}$ is bounded above by the probability that perturbations cause cluster overlap, which is small for sufficiently small $\sigma$. In contrast, a random codebook $\mathcal{C}^{\mathrm{rand}}$ sampled isotropically from a standard Gaussian lacks such separation, and embeddings from different domains have a higher probability of being assigned the same codeword. Thus, we combines these observations and proof that:
\[
\mathbb{P}\big[\text{code}(\mathbf{z}^a; \mathcal{C}^{\mathrm{perturb}}) \neq \text{code}(\mathbf{z}^b; \mathcal{C}^{\mathrm{perturb}})\big]
\geq
\mathbb{P}\big[\text{code}(\mathbf{z}^a; \mathcal{C}^{\mathrm{rand}}) \neq \text{code}(\mathbf{z}^b; \mathcal{C}^{\mathrm{rand}})\big].
\]
\end{proof}

\section{More Detailed Experimental Setup}
\label{appendix: experimental setup}

\subsection{Dataset}
\begin{table}[!ht]
  \centering
  \caption{The statistics of evaluated datasets in our experiments.}
  \label{tab:data stats}

  \resizebox{\textwidth}{!}{
    \begin{tabular}{lccccccc}
      \toprule
      Dataset  & Domain & Task & \# Graphs & Avg. \#Nodes & Avg. \#Edges & \# Classes \\ \midrule
      Cora    & Citation        & Node          & 1                  & 2,708                 & 10,556                & 7                   \\
      PubMed   & Citation        & Node          & 1                  & 19,717                & 44,338                & 3                   \\
      Arxiv    & Citation        & Node          & 1                  & 169,343               & 1,166,243             & 40                  \\
      WikiCS   & Hyper link        & Node          & 1                  & 11,701                & 216,123               & 10                  \\
      FB15K237 & Knowledge       & Link          & 1                  & 14,541                & 310,116               & 237                 \\
      WN18RR   & Knowledge       & Link          & 1                  & 40,943                & 93,003                & 11                  \\
      PCBA     & Molecule        & Graph         & 437,929            & 26.0                  & 28.1                  & 128                 \\
      HIV      & Molecule        & Graph         & 41,127             & 25.5                  & 27.5                  & 2                   \\
      \bottomrule
    \end{tabular}
  }

\end{table}

We utilize 8 datasets from various domains and tasks, as detailed in Table \ref{tab:data stats}.

\subsection{Data Processing}
\label{appendix: decentralized graph simulation}

    \begin{figure}[t]
     \centering
      \includegraphics[width=0.98\textwidth]{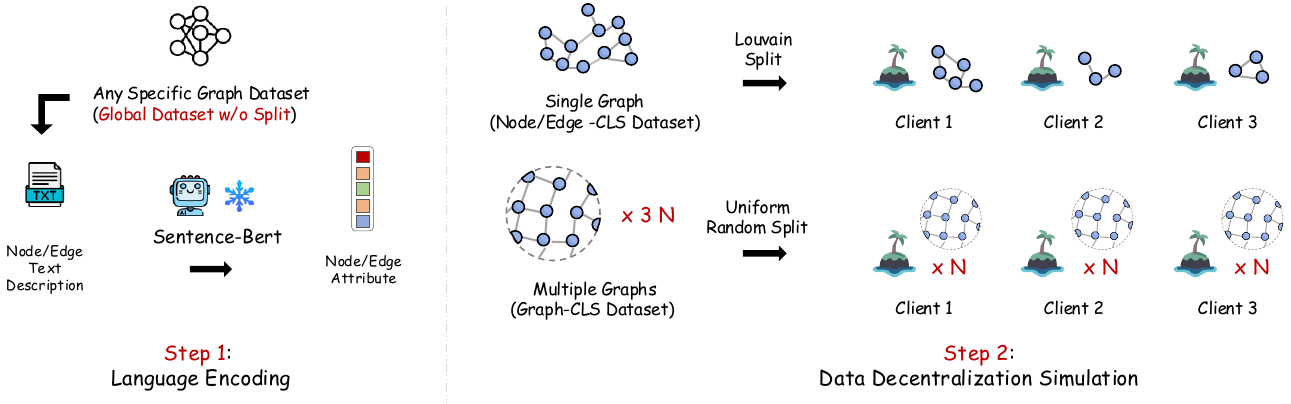}
      \caption{Data processing pipeline to simulate decentralized multi-domain and task graphs.}
    \label{fig: data_processing}
    \vspace{-0.4cm}
    \end{figure}

Our data processing process can be illustrated as Fig.~\ref{fig: data_processing}, consisting of two steps: \textbf{Step 1: Language Encoding.} We use Sentence-Bert~\cite{reimers2019sentence_bert} to uniformly encode text attribute graph datasets in different fields to uniformly convert node and edge text into 768-dimensional vectorized representations; and \textbf{Step 2: Data Decentralization Simulation.} Real-world graph data is inherently collected by multiple institutions, resulting in naturally decentralized data distributions. Prior studies in FGL categorize such decentralization into three canonical levels~\cite{he2021fedgraphnn}: (1) node-level, where each client maintains ego-networks extracted from a global graph; (2) subgraph-level, where each client collects a local subgraph induced from a broader graph topology; and (3) graph-level, where each client independently gathers a set of graphs from a larger collection. Notably, the node-level setting can be regarded as a special case of subgraph-level decentralization. Hence, we focus on the latter two in this work. Specifically, under the \emph{subgraph-level} setting, the implicit global graph $G = (\mathcal{V}, \mathcal{E})$ has multiple substructures independently collected by different clients. The $k$-th client locally collects a subgraph $G_k = (\mathcal{V}_k, \mathcal{E}_k)$ such that $\mathcal{V}_k \subsetneq \mathcal{V}$ and $\mathcal{E}_k \subsetneq \mathcal{E}$; Under the \emph{graph-level} setting, the $k$-th client independently collects a subset of graphs $\mathcal{S}_k$ from an implicit broader collection $\mathcal{S} = \{G_i\}_{i=1}^M$, i.e., $\mathcal{S}_k \subsetneq \mathcal{S}$. To simulate these decentralized scenarios in our experiments, we adopt two partitioning strategies: the Louvain algorithm~\citep{blondel2008louvain} for simulating subgraph-level decentralization, and random allocation for graph-level decentralization, both of which is widely used in various FGL studies~\cite{li2024openfgl}.

Finally, the default train/validation/test splits used in the fine-tuning stage are summarized in Table.~\ref{tab: data_splits}. Notably, due to the distributed nature of federated settings, the training set proportion is typically much higher than in centralized graph learning paradigms. This splitting strategy has been widely adopted in prior works~\cite{li2024openfgl}.

\begin{table}[h]
\centering
\caption{Train/Validation/Test splits for different datasets}
\label{tab: data_splits}
\begin{tabular}{lccc}
\toprule
\textbf{Dataset} & \textbf{Train Split} & \textbf{Validation Split} & \textbf{Test Split} \\
\midrule
Cora        & 5\%   & 20\%  & 40\% \\
PubMed      & 60\%  & 20\%  & 20\% \\
WikiCS      & 80\%  & 10\%  & 10\% \\
Arxiv       & 80\%  & 10\%  & 10\% \\
WN18RR      & 80\%  & 10\%  & 10\% \\
FB15k237    & 80\%  & 10\%  & 10\% \\
ChemHIV     & 80\%  & 10\%  & 10\% \\
ChemPCBA    & 80\%  & 10\%  & 10\% \\
\bottomrule
\end{tabular}
\end{table}

\subsection{Baselines}
\label{appendix: baselines}

Since this paper is the first to explore FedGFM, we transfer baselines from adjacent fields. Specifically, in our experiments, we evaluate 20 baselines, which can be summarized into 3 categories. The detailed descriptions of these baselines are as follows:

\vspace{0.1cm}
\noindent \textbf{(1) Isolated Supervised Learning.} These methods train individual supervised models on each client without federated communication. They serve as a reference for evaluating negative transfer and the benefits of federated learning. The models in this category include a linear layer, GCN~\cite{kipf2016gcn}, GAT~\cite{velivckovic2017gat}, GraphSAGE~\cite{hamilton2017graphsage}, and GIN~\cite{xu2018gin};

\textbf{GCN}~\cite{kipf2016gcn} is a classical model in graph neural networks, which captures graph structure through spectral convolutions based on the normalized graph Laplacian. By aggregating information from neighboring nodes, it enables efficient node classification and handles graph data in a computationally effective manner. The use of the Laplacian matrix simplifies the convolution operation, making it a foundational approach in graph representation learning.

\textbf{GAT}~\cite{velivckovic2017gat} draws inspiration from the success of attention mechanisms in natural language processing, introducing a novel graph attention mechanism that allows nodes to dynamically focus on the most relevant neighbors. This attention-based aggregation enables more adaptive learning.

\textbf{GraphSAGE}~\cite{hamilton2017graphsage} extends graph neural networks by introducing a sampling-based message passing mechanism, which allows for scalable neighborhood aggregation. This approach is particularly well-suited for inductive learning, as it can efficiently generalize to unseen nodes by sampling a fixed-size neighborhood during training. The use of different learnable aggregation functions further enhances scalability, enabling the model to handle large graphs effectively.

\textbf{GIN}~\cite{xu2018gin} is designed to preserve graph structural information and has been shown to be as expressive as the Weisfeiler-Lehman graph isomorphism test in distinguishing graph structures. Notably, GIN is usually more suitable for graph-level tasks.

\vspace{0.1cm}
\noindent \textbf{(2) FGL Approaches.} We evaluate various representatives FL/FGL baselines, including two FL methods desinged for FL with vision tasks (FedAvg~\cite{mcmahan2017fedavg}, MOON~\cite{li2021moon}), and subgraph-level FGL techniques (FedSage+~\cite{zhang2021fedsage}, Fed-PUB~\cite{baek2022fedpub}, FedGTA~\cite{li2024fedgta}, FedTAD~\cite{zhu2024fgl_fedtad}, FGSLL~\cite{huang2024fgssl}, FGGP~\cite{wan2024fgl_fggp}) and graph-level FGL methods (GCFL~\cite{xie2021gcfl} and FedStar~\cite{tan2023fedstar}). The detailed descriptions of these baselines are as follows:

\textbf{FedAvg}~\cite{mcmahan2017fedavg} is a simple yet effective method in FL for the vision and language field, enabling decentralized model training while preserving data privacy. A central server distributes the global model to clients for local updates. The server then aggregates the clients' local models to form a new global model, which is broadcast to all clients to update their local models in the next round.

\textbf{MOON}~\cite{li2021moon} is a representative FL method originally developed for the vision domain. It leverages contrastive learning at the model level to align local and global representations, thereby mitigating performance degradation caused by data heterogeneity across clients.

\textbf{FedSage+}~\cite{zhang2021fedsage} integrates node features, link structures, and labels using a GraphSAGE~\cite{hamilton2017graphsage} model with FedAvg~\cite{mcmahan2017fedavg} for FGL over local subgraphs (i.e., subgraph-level FGL). It also introduces a neighbor generator to handle cross-client missing links, improving robustness and ensuring a more comprehensive graph representation.

\textbf{Fed-PUB}~\cite{baek2022fedpub} is a personalized subgraph-level FGL framework that improves local GNNs without relying on a global model. It measures inter-client similarity using functional embeddings derived from random graph inputs, enabling weighted aggregation at the server. A client-specific sparse mask further guides personalized parameter updates, facilitating subgraph-aware local adaptation.

\textbf{FedGTA}~\cite{li2024fedgta} integrates large-scale graph learning into FGL by having clients encode topology and node attributes, compute local smoothing confidence and mixed feature moments, and share them with the server. The server aggregates personalized models using smoothing confidence as aggregation weights.

\textbf{FedTAD}~\cite{zhu2024fgl_fedtad} is a subgraph-level FGL method that computes topology-aware node embeddings to estimate class-wise knowledge reliability. This guidance enables the server to perform data-free knowledge distillation, transferring reliable knowledge from local clients to the global model.

\textbf{FGSSL}~\cite{huang2024fgssl} is a subgraph-level FGL technique, which addresses client drift by aligning node-level semantics and preserving graph-level structures. It employs contrastive objectives to align nodes of the same class while separating different classes, and distills global relational knowledge into local models via similarity distributions.

\textbf{FGGP}~\cite{wan2024fgl_fggp} is a subgraph-level FGL approach, which decomposes the global model into two tiers connected via prototypes. At the classifier level, class prototypes replace traditional classifiers for better discriminability; at the feature level, contrastive learning injects global knowledge into prototypes to enhance generalization.

\textbf{GCFL+}~\cite{xie2021gcfl} is a graph-level FGL framework that clusters clients based on GNN gradient patterns to address structural and feature heterogeneity. It further improves stability through gradient sequence-based clustering using dynamic time warping, enhancing both clustering quality and robustness.

\textbf{FedStar}~\cite{tan2023fedstar} enables graph-level FGL by decoupling structure and feature learning. Clients share domain-invariant structural embeddings via an independent encoder, while learning personalized features locally, reducing feature misalignment and improving transferability.

\vspace{0.1cm}
\noindent \textbf{(3) Federated Variants of Centralized GFM Approaches.} These baselines adapt state-of-the-art centralized GFM training strategies to the federated setting. Specifically, we include OFA~\cite{gfm_ofa}, GFT~\cite{gfm_gft}, UniGraph~\cite{gfm_unigraph}, GQT~\cite{gfm_gqt}, and GraphCLIP~\cite{gfm_graphclip}. In their original centralized versions, these methods perform pre-training on all available data at a central learning system using self-supervised objectives. Their federated counterparts distribute this pre-training process across clients. \textbf{Specifically, for our experiments,} in each communication round of the pre-training phase, each local client deploys the corresponding framework based on its own local data and performs 2 epoch optimization. Subsequently, all trainable parameters will be uploaded to the server, and the parameters will be averaged to obtain the global model, which will be broadcast to all clients as the starting point for the next round of local optimization.

\textbf{OFA$^*$}~\cite{gfm_ofa} is a representative training paradigm for GFM, aiming to learn generalizable representations over cross-domain and cross-task textual attributed graphs. It first standardizes the description of nodes and edges via carefully designed language model prompts, transforming any textual attributed graph into a unified vectorized representation. Additionally, OFA introduces NODES-OF-INTEREST prompts to unify various graph tasks within a single modeling framework. 

\textbf{GFT$^*$}~\cite{gfm_gft} treats computation trees derived from message passing as transferable patterns over graphs. Based on this insight, it adopts a gVQ-VAE architecture to map computation trees into discrete codebook representations. Through self-supervised reconstruction on cross-domain graphs during pre-training, it learns a generalizable GFM with strong cross-graph transferability.

\textbf{UniGraph$^*$}~\cite{gfm_unigraph} is a GFM training framework that encodes heterogeneous graphs, including those without inherent textual features, into unified textual representations to support cross-domain transferability. It adopts a cascaded architecture of language models and GNNs to jointly capture semantic and structural information. UniGraph further introduces a Masked Graph Modeling objective for large-scale self-supervised pre-training and applies graph instruction tuning with LLMs to enhance zero-shot and few-shot generalization.

\textbf{GQT$^*$}~\cite{gfm_gqt} introduces a novel graph quantized tokenizer that decouples tokenizer training from Transformer training, leveraging multi-task graph self-supervised learning to produce robust and generalizable graph tokens. By using the residual Vector Quantization technique, GQT learns hierarchical discrete tokens, reducing memory requirements and enhancing generalization.

\textbf{GraphCLIP$^*$}~\cite{gfm_graphclip} addresses key challenges in text-attributed graphs, including heavy reliance on label information and limited cross-domain transferability. It introduces a self-supervised contrastive pretraining method using graph-summary pairs curated with the help of LLMs. By leveraging invariant learning, GraphCLIP enhances zero-shot transferability and proposes a graph prompt tuning technique for few-shot learning, mitigating catastrophic forgetting.

\subsection{Model Architecture}
For \textbf{Isolated Supervised Learning Methods}, we adopt a two-layer architecture with 64 hidden units. For \textbf{FL/FGL Methods}, if a method does not specify a custom architecture, we select the backbone based on the downstream task: GraphSAGE is used for node and edge classification, while GIN is employed for graph classification. For \textbf{Federated Variants of Centralized GFM Methods}, we follow the backbone choices reported in the original papers. For \textbf{FedGFM+}, we employ a gVQ-VAE as the backbone for both client-side local models and the server-side global model. The encoder is a 2-layer GraphSAGE-based graph convolutional network that jointly encodes node and edge features from the input graph \( G = (V, E) \). All layers—including input, hidden, and output—are set to 768 dimensions, matching the Sentence-BERT~\cite{reimers2019sentence_bert} representations of node and edge attributes. The encoder outputs node embeddings \( Z \in \mathbb{R}^{N \times 768} \), where \( N \) is the number of nodes. These embeddings are then quantized via a multi-head gVQ-VAE codebook using cosine similarity for nearest-neighbor retrieval. The codebook comprises 4 heads, each containing 128 learnable tokens. A shared linear projection is applied to aggregate the multi-head outputs into the final quantized representation. In addition to the backbone network, FedGFM+ also introduces multiple light-weight learnable graph hints for each client. By default, we learn 3 local graph prompts for each client. Finally, for task-specific heads used during GFM fine-tuning, we follow the original design if specified in the corresponding dataset paper. Otherwise, for node classification, we apply a single-layer MLP to predict node labels from node embeddings; for edge classification, we average the embeddings of the two nodes to form the edge representation and apply a single-layer MLP; for multi-task graph classification, we perform mean pooling over node embeddings to obtain a graph-level representation, which is fed into a MLP to predict binary labels for each tasks.

\begin{figure}[t]
 \centering
  \includegraphics[width=0.998\textwidth]{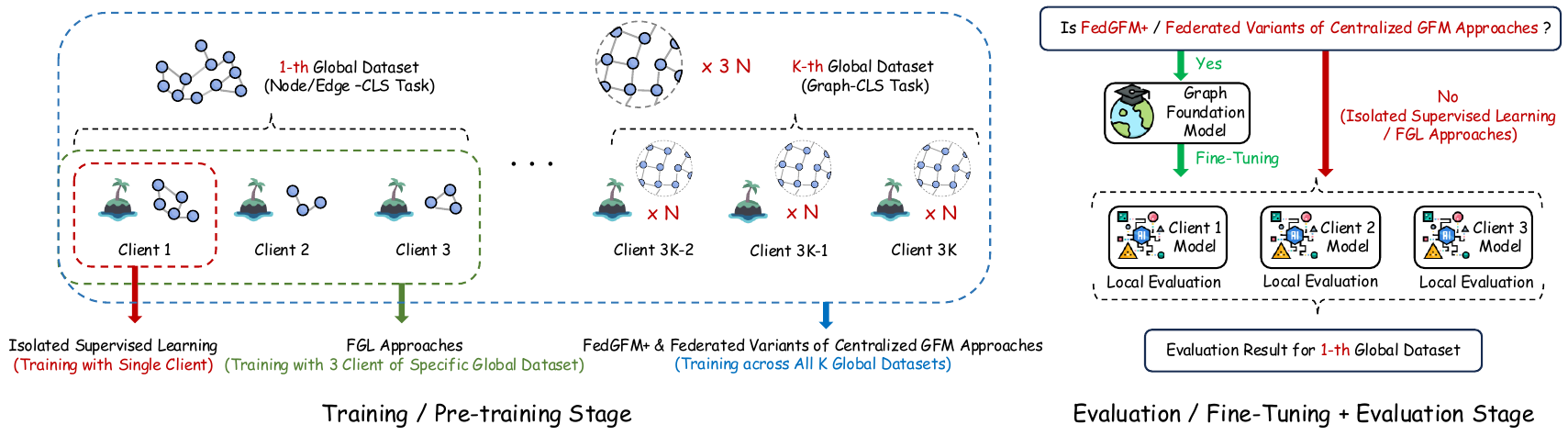}
  \caption{Illustration of the pipeline about the traing and evaluation stage for isolated supervised learning, FGL approaches, FedGFM+, and federated variants of centralized GFM approaches.}
\label{fig: train and eval pipeline}
\vspace{-0.4cm}
\end{figure}

\subsection{Training and Evaluation Illustration}

We illustrate the training and evaluation processes for all baselines and FedGFM+ in Fig.~\ref{fig: train and eval pipeline}, with detailed descriptions as follows:

\paragraph{Training / Pretraining Stage.} For the \textbf{Isolated Supervised Learning Baselines},  each client trains a model independently from scratch, using only its own local graph(s), without any collaboration or information exchange. For the \textbf{FL/FGL Baselines}, we run a FL/FGL algorithm among every 3 clients from the same global dataset. For example, the Cora dataset is split using the Louvain algorithm into clients 1, 2, and 3, and subgraph-level FGL algorithms such as FedGTA are then applied among these clients. Notably, as mentioned in Sec.~\ref{sec: Introduction}, due to the heterogeneity of data and tasks, most FL/FGL algorithms can only be simulated among different shards of the same dataset. Moreover, existing FGL algorithms cannot be applied simultaneously to the three tasks of node classification, edge classification, and graph classification. For \textbf{FedGFM+ and Federated Variants of Centralized GFM Baselines}, all clients participate in federated pre-training together, which enables extensive collaboration among graph datasets from multiple fields.

\paragraph{Evaluation / Fine-Tuning + Evaluation Stage.} For each dataset, we evaluate the performance on the test sets of the three clients associated with it, and report the mean and variance of the resulting metrics. For node and edge classification tasks, we use Accuracy (ACC) as the evaluation metric, while for graph classification tasks, we adopt the Area Under the Receiver Operating Characteristic Curve (AUC-ROC). To assess the performance of each individual client under different settings, we follow distinct evaluation protocols. For Isolated Supervised Learning baselines, we directly evaluate each client’s local model without any collaboration. For FL and FGL baselines, we evaluate each client’s model after training the global model for two communication rounds. For FedGFM+ and the federated variants of centralized GFM baselines, we first attach a task-specific header and then fine-tune the model using each client’s local graph before evaluation.

\subsection{Hyperparameters}

For \textbf{Isolated Supervised Learning Baselines}, we perform 1{,}000 epochs of local training with early stopping based on validation performance. For \textbf{FL/FGL Baselines}, we conduct 100 communication rounds, where each round includes 2 local training epochs. We use the Adam optimizer with a learning rate of $1 \times 10^{-2}$, weight decay of $5 \times 10^{-4}$, and dropout rate of $0.5$. For \textbf{federated variants of centralized GFM Baselines}, we adopt the hyperparameter configurations reported in their original papers whenever available. 
When unspecified, we employ automated hyperparameter optimization using the Optuna framework~\cite{akiba2019optuna}. 
Federated pre-training is carried out for 50 communication rounds, each consisting of 2 local pre-training epochs. For our proposed \textbf{FedGFM+} framework, we fix the learning rate for pre-training to $1 \times 10^{-4}$. 
During fine-tuning, we perform a grid search over learning rates in $\{10^{-5}, 10^{-4}, 10^{-3}, 10^{-2}, 10^{-1}\}$ for each dataset. 
The weight decay is fixed to $5 \times 10^{-4}$, and the batch size is set to 1{,}024. 
Federated pre-training is conducted for 25 communication rounds, with 2 local epochs per round.

\subsection{Experimental Environment}
The experimental machine is an Intel(R) Xeon(R) Gold 6240 CPU @ 2.60GHz and NVIDIA A100 with 80GB memory and CUDA 12.4. The operating system is Ubuntu 22.04.5 with 251GB memory.

\section{Few-shot Learning Results}
\label{appendix: more exp}

We perform a few-shot evaluation across a range of downstream tasks. Specifically, for node and edge classification tasks, we constrain the number of labeled samples per class to at most 2. For graph classification tasks, however, we do not report few-shot performance, as each graph instance is associated with multi-dimensional labels, making few-shot evaluation non-trivial. The experimental results are summarized as Table.~\ref{tab: few shot}

\begin{table*}[h]
    \setlength{\abovecaptionskip}{0.2cm} 
    \renewcommand{\arraystretch}{2} 
    \caption{2-shot Performance comparison of FedGFM+ and baselines. `*' denotes federated variants of centralized GFM. `N/A' denotes  task inapplicability. Node and edge classification datasets are marked in \setlength{\fboxsep}{0pt}\colorbox{pink!50}{\strut red} and \setlength{\fboxsep}{0pt}\colorbox{yellow!50}{\strut yellow}, respectively.}
    \footnotesize 
    \label{tab: few shot}
    \resizebox{\linewidth}{34mm}{  
    \setlength{\tabcolsep}{2mm}{  
    \begin{tabular}{c|c|c|c|c|c|c}
        \toprule[1pt] 
         \diagbox{Method}{Dataset} & \setlength{\fboxsep}{0pt}\colorbox{pink!50}{\strut Cora} & \setlength{\fboxsep}{0pt}\colorbox{pink!50}{\strut PubMed} & \setlength{\fboxsep}{0pt}\colorbox{pink!50}{\strut OGB-arxiv} & \setlength{\fboxsep}{0pt}\colorbox{pink!50}{\strut WikiCS} & \setlength{\fboxsep}{0pt}\colorbox{yellow!50}{\strut FB15K237} & \setlength{\fboxsep}{0pt}\colorbox{yellow!50}{\strut WN18RR} \\

        \midrule[0.1pt]

        OFA$^*$~\cite{gfm_ofa}
        & \parbox[c]{1cm}{\centering 54.31 \\ \tiny $\pm$ 0.18}
        & \parbox[c]{1cm}{\centering 45.29 \\ \tiny $\pm$ 0.26} 
        & \parbox[c]{1cm}{\centering 20.56 \\ \tiny $\pm$ 0.42}
        & \parbox[c]{1cm}{\centering 40.05 \\ \tiny $\pm$ 0.10}
        & \parbox[c]{1cm}{\centering 19.72 \\ \tiny $\pm$ 0.33}
        & \parbox[c]{1cm}{\centering 31.28 \\ \tiny $\pm$ 0.20}\\

        GFT$^*$~\cite{gfm_gft}
        & \parbox[c]{1cm}{\centering 52.16 \\ \tiny $\pm$ 0.39}
        & \parbox[c]{1cm}{\centering 44.71 \\ \tiny $\pm$ 0.10}
        & \parbox[c]{1cm}{\centering 18.31 \\ \tiny $\pm$ 0.22}
        & \parbox[c]{1cm}{\centering 37.42 \\ \tiny $\pm$ 0.56}
        & \parbox[c]{1cm}{\centering 17.49 \\ \tiny $\pm$ 0.24}
        & \parbox[c]{1cm}{\centering 29.55 \\ \tiny $\pm$ 0.41} \\

        UniGraph$^*$~\cite{gfm_unigraph}
        & \parbox[c]{1cm}{\centering 54.22 \\ \tiny $\pm$ 0.27}
        & \parbox[c]{1cm}{\centering 46.41 \\ \tiny $\pm$ 0.50} 
        & \parbox[c]{1cm}{\centering 19.88 \\ \tiny $\pm$ 0.15}
        & \parbox[c]{1cm}{\centering 39.46 \\ \tiny $\pm$ 0.17}
        & \parbox[c]{1cm}{\centering 18.45 \\ \tiny $\pm$ 0.36}
        & \parbox[c]{1cm}{\centering 31.53 \\ \tiny $\pm$ 0.20}\\

        GQT$^*$~\cite{gfm_gqt}
        & \parbox[c]{1cm}{\centering 52.45 \\ \tiny $\pm$ 0.18}
        & \parbox[c]{1cm}{\centering 45.28 \\ \tiny $\pm$ 0.26} 
        & \parbox[c]{1cm}{\centering 20.10 \\ \tiny $\pm$ 0.31}
        & \parbox[c]{1cm}{\centering 39.25 \\ \tiny $\pm$ 0.42}
        & \parbox[c]{1cm}{\centering 20.40 \\ \tiny $\pm$ 0.18}
        & \parbox[c]{1cm}{\centering 30.08 \\ \tiny $\pm$ 0.14} \\

        GraphCLIP$^*$~\cite{gfm_graphclip}
        & \parbox[c]{1cm}{\centering 55.31 \\ \tiny $\pm$ 0.12}
        & \parbox[c]{1cm}{\centering 44.25 \\ \tiny $\pm$ 0.36} 
        & \parbox[c]{1cm}{\centering 20.39 \\ \tiny $\pm$ 0.17}
        & \parbox[c]{1cm}{\centering 38.58 \\ \tiny $\pm$ 0.16}
        & \parbox[c]{1cm}{\centering 20.58 \\ \tiny $\pm$ 0.28}
        & \parbox[c]{1cm}{\centering 31.42 \\ \tiny $\pm$ 0.45} \\

        \midrule[0.1pt]
        FedGFM+ (Ours)
        & \parbox[c]{1cm}{\centering \textbf{58.33} \\ \tiny $\pm$ \textbf{0.42}}
        & \parbox[c]{1cm}{\centering \textbf{50.19} \\ \tiny $\pm$ \textbf{0.23}} 
        & \parbox[c]{1cm}{\centering \textbf{21.34} \\ \tiny $\pm$ \textbf{0.15}}
        & \parbox[c]{1cm}{\centering \textbf{43.35} \\ \tiny $\pm$ \textbf{0.39}}
        & \parbox[c]{1cm}{\centering \textbf{21.94} \\ \tiny $\pm$ \textbf{0.17}}
        & \parbox[c]{1cm}{\centering \textbf{33.64} \\ \tiny $\pm$ \textbf{0.42}} \\
        \bottomrule[1pt] 
    \end{tabular}
    }}

\end{table*}

As observed, FedGFM+ consistently outperforms naive federated adaptations of centralized GFM training strategies across all evaluated settings. By integrating the AncDAI and AdaDPP modules, FedGFM+ effectively constructs domain-aware semantic priors that enhance generalization to downstream tasks in heterogeneous domains, even with limited fine-tuning labels. Despite these gains, it is important to note that FedGFM+ still falls short of its own performance under scenarios with abundant labeled data.

\end{document}